%% file: main.tex
\documentclass[runningheads,orivec]{llncs}
\usepackage[T1]{fontenc}
\usepackage{graphicx}
\input{preamble}

\usepackage{hyperref}
\usepackage{enumitem}
\usepackage{color}

\begin{document}
\title{Hybrid Robustness Verification for Spatio-Temporal Neural Networks}

\author{\CR{Sherwin Varghese\inst{1}\orcidID{0000-0003-0191-8107} \and
Matthew Wicker\inst{1}\orcidID{0000-0003-0779-3114} \and
Alessio Lomuscio\inst{1}\orcidID{0000-0003-3420-723X}}
\thanks{Accepted at the 9th International Symposium on AI Verification (SAIV 2026).}
}

\authorrunning{S. Varghese et al.}

\institute{\CR{Imperial College London, United Kingdom} \\
\email{\CR{\{sherwin.varghese, m.wicker, a.lomuscio\}@imperial.ac.uk}}
}

\maketitle              

\input{sections/0_abstract}
\input{sections/1_introduction}

\input{sections/2_related_work}
\input{sections/3_preliminaries}
\input{sections/4_method}
\input{sections/5_experiments}

\input{sections/6_conclusion}
\input{sections/7_acknowledgements}
\input{sections/9_appendix}

%
%
%
\clearpage
\bibliographystyle{splncs04}
\bibliography{main}

\end{document}

%% file: preamble.tex
\usepackage{amsthm}
\usepackage{amsmath, amssymb, graphicx}
\usepackage{accents}
\usepackage{multirow}
\usepackage{xcolor}
\usepackage{booktabs}
\usepackage{multirow}
\usepackage{tabularx}
\usepackage{algorithm}
\usepackage{algorithmic}
\usepackage{rotating}

\newcommand{\norm}[1]{\left\lVert#1\right\rVert}
\newcommand{\ubar}[1]{\underaccent{\bar}{#1}}

\newcommand{\CR}[1]{#1}

\makeatletter
\renewcommand\section{\@startsection{section}{1}{\z@}%
  {-1.2ex \@plus -0.4ex \@minus -0.2ex}%
  {0.6ex \@plus 0.2ex}%
  {\normalfont\large\bfseries\boldmath
   \rightskip=\z@ \@plus 1fil\relax}}
\renewcommand\subsection{\@startsection{subsection}{2}{\z@}%
  {-1.0ex \@plus -0.3ex \@minus -0.2ex}%
  {0.4ex \@plus 0.2ex}%
  {\normalfont\normalsize\bfseries\boldmath
   \rightskip=\z@ \@plus 1fil\relax}}
\renewcommand\subsubsection{\@startsection{subsubsection}{3}{\z@}%
  {-0.8ex \@plus -0.3ex \@minus -0.2ex}%
  {0.3ex \@plus 0.2ex}%
  {\normalfont\normalsize\bfseries\boldmath
   \rightskip=\z@ \@plus 1fil\relax}}
\makeatother

\everypar{\looseness=-1 }

%% file: sections/0_abstract.tex
\begin{abstract}

\looseness=-1
With AI increasingly deployed in safety-critical systems, providing formal robustness guarantees for the underlying models is essential. Existing verification methods either rely on overly conservative approximations or incur prohibitive computational costs. For example, the use of $\ell_p$-norm perturbations in video settings encodes the belief that the adversary can inject noise in every video frame. In practice, adversarial perturbations exhibit structured spatial and temporal correlations, constrained to lower-dimensional, semantically meaningful subspaces. In this work, we study robustness verification of \emph{3D convolutional neural networks} (3D CNNs) processing video and volumetric inputs, targeting applications in action recognition (UCF-101), autonomous driving (Udacity), and medical imaging (MedMNIST) exploiting realistic assumptions on adversarial strength by modelling them as spatio-temporal constraints --- where the attacker can modify either a subset of frames or patches within a set of consecutive frames. We demonstrate that modelling realistic constraints enables tighter approximations, particularly in CNNs. We introduce \emph{Spatio-Temporal Bound Propagation} (STBP), a verification framework that computes an exact closed-form characterization of the first convolutional layer and propagates certified bounds through subsequent layers using scalable approximations. Computing the exact closed-form provides the tightest bounds for the first convolutional layer. Thus, we utilise approximation methods in the remainder of the network. To spur further progress in this field, we propose \texttt{ST-Bench}, a verification benchmark for autonomous driving and activity recognition, to systematically evaluate verifiable robustness. Compared to existing verification and training-based approaches, STBP provides stronger robustness guarantees with significantly improved scalability, achieving up to $1.7\times$ higher certified robust accuracy under identical perturbation budgets.

\end{abstract}

%% file: sections/1_introduction.tex
\section{Introduction}\label{sec:introduction}

\looseness=-1
Deep learning models achieve state-of-the-art performance in safety-critical domains such as autonomous driving and medical imaging, but remain vulnerable to adversarial perturbations—small, often imperceptible input changes that induce incorrect predictions~\cite{szegedy2013intriguing,goodfellow2015explaining,fawzi2018analysis}. Such failures pose serious risks in high-stakes applications, motivating the use of formal verification to provide rigorous robustness guarantees. Existing verification methods typically cast robustness as a non-convex optimization problem and rely on convex relaxations, including SMT solving~\cite{katz2017reluplex,katz2019marabou}, mixed-integer programming~\cite{tjeng2018evaluating}, abstract interpretation~\cite{gehr2018ai2,singh2019abstract}, and relaxation-based techniques~\cite{wong2018provable,dvijotham2018dual}. Yet, practical adoption of certification methods have lagged behind the wide-spread adoption of machine learning models in practice due to methods being (A) overly-approximate and therefore unable to provide actionable guarantees or (B) too computationally expensive to provide guarantees at scale. Both of these issues are substantially exacerbated with increasing feature dimensionality where methods from abstract interpretation (i.e., interval bound propagation) are far too approximate to provide actionable guarantees~\cite{gowal2018ibp} and methods that are exact in the limit (i.e., mixed integer linear programming) are too computationally expensive to be practically useful~\cite{lin2020last,lee2020mixed,lambert2024benchmark}. This limits current verification to applications with low-dimensional datasets that can be solved by small to medium models~\cite{katz2019marabou,banerjee2024relational}.

\begin{figure*}[t]
	\centering
	\includegraphics[alt={Architecture diagram of Spatio-Temporal Bound Propagation: input video tensor is fed through structured perturbation encoding, an exact closed-form first convolutional layer, and three alternative bound-propagation pipelines (IBP, Lipschitz, Löwner--John) for the remaining layers.}, width=0.9\linewidth]{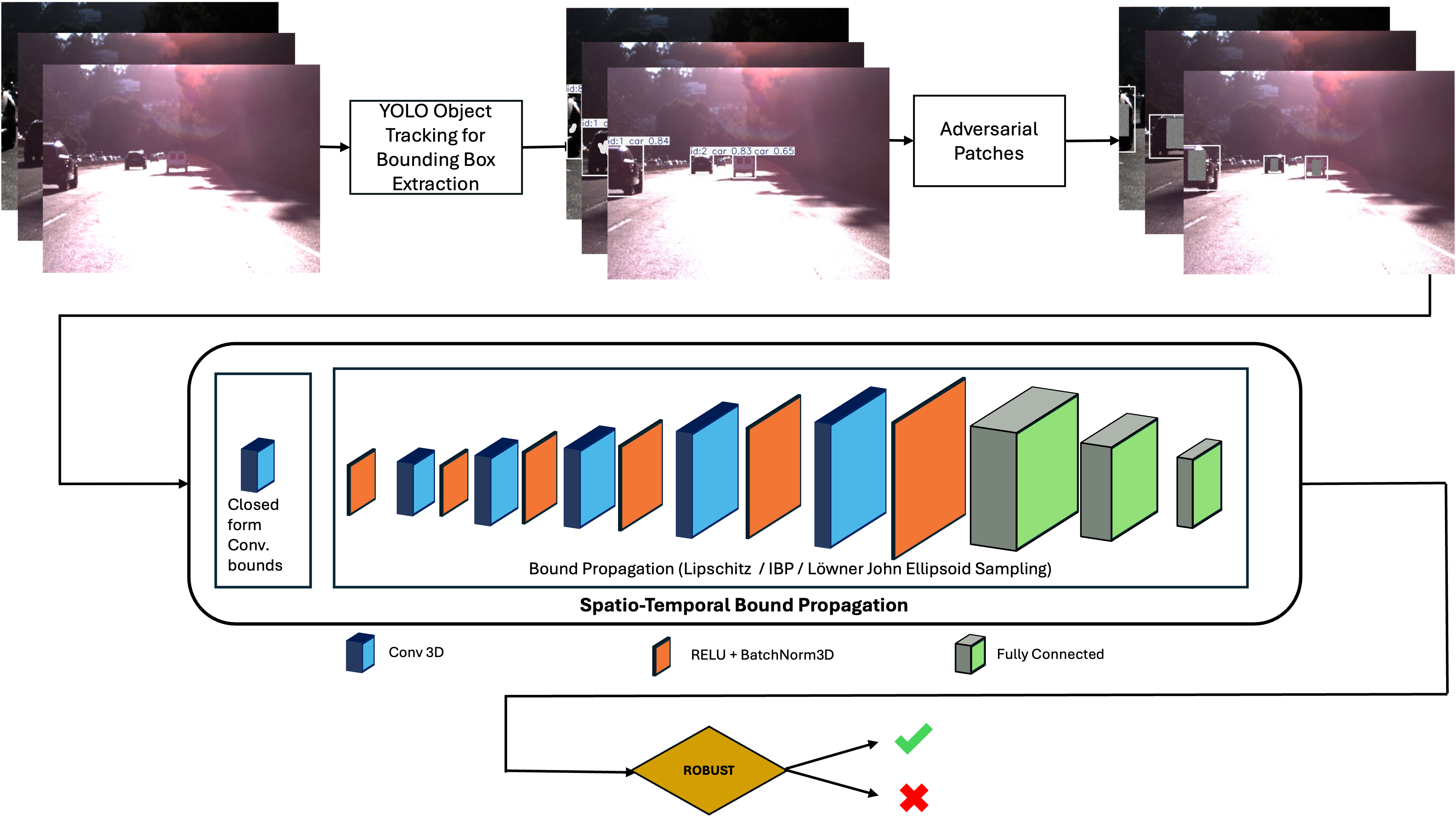}
	\centering
	\caption{\CR{Overview of Spatio-Temporal Bound Propagation (STBP) for verifying spatio-temporal adversarial patch propagation.}}
	\label{fig:stbp-arch}
\end{figure*}

To adopt verification for high dimensional inputs, we introduce new computational approaches that impose realistic constraints, exploiting prior knowledge about spatio-temporal correlations between feature dimensions. We argue that a key source of their failure in high-dimensional settings is the choice of adversarial model. Standard $\ell_p$-norm perturbations treat every input dimension as independently attackable, which is appropriate for static images but fundamentally misrepresents the structure of real-world threats in video~\cite{wang2024benchmarking,luo2018fast} and volumetric medical imaging~\cite{wu2024efficient,che2023towards,che2023image}. In these domains, physically plausible adversaries act by manipulating objects in the scene --- such as placing an adversarial road sign or bumper sticker visible across consecutive frames --- or through sensor noise that is inherently spatially correlated~\cite{tramer2019adversarial,agarwal2020noise,hingun2023reap}. Both threat types induce perturbations that are shared or correlated across the temporal and spatial axes of the input. Adopting this \emph{spatio-temporal attack model} directly reduces the effective adversarial dimension: rather than an independent perturbation per pixel per frame, the attacker controls a small set of shared degrees of freedom. To exploit this structure, we introduce \emph{Spatio-Temporal Bound Propagation} (STBP), which encodes the spatio-temporal constraints as an affine transformation and derives an exact closed-form characterisation of the reachable set at the initial convolutional layers. The resulting bounds are then propagated through the remainder of the network via abstract interpretation, preserving soundness while remaining computationally feasible.

\looseness=-1
We experimentally validate our approach on a variety of datasets including the UCF-101 dataset~\cite{soomro2012ucf101}, the Udacity self-driving car dataset~\cite{udacity2017udacity}, and various medical imaging benchmarks \cite{yang2021medmnistv2}. To understand spatio-temporal robustness in these settings, we generate novel, sophisticated verification benchmarks. In autonomous driving, we have curated a dataset simulating potential regions of each frame where one might introduce adversarial road signs or bumper stickers in order to understand the worst-case robustness to such introductions. In order to spur further growth in the investigation of practically robust neural networks on video and volumetric medical imaging data, we additionally release our curated benchmarking datasets accessible via the following~\href{https://zenodo.org/records/15426146}{link}~\footnote{https://zenodo.org/records/15426146}.

In summary, we make the following contributions: 
\begin{itemize}[
    leftmargin=*,
    itemsep=0pt,
    parsep=0pt,
    topsep=2pt,
    partopsep=0pt
]
    \item  \CR{Identify the regimes in which existing formal verification techniques become inefficient or overly conservative for spatio-temporal systems; due to explosive bounds or compute-time constraints and motivate the notion of realistic adversarial perturbations.}
	\item Introduce spatio-temporal bound propagation (STBP) that efficiently verifies practical robustness properties in high-dimensional domains by exploiting known spatio-temporal feature correlations. 
	\item Establish that STBP substantially out-performs existing robust verification with IBP, by over 1.7 times on our real-world activity recognition and autonomous driving safety specifications.
	\item Curate and publicly release a set of real-world benchmark dataset for safety-critical domains such as video classification and autonomous driving with novel adversarial specifications. 
\end{itemize}
The rest of the paper is organised as follows. 

In section~\ref{sec:method}, we describe the proposed method detailing realistic adversarial constraints, spatio-temporal bound propagation and spatio-temporal adversarial attacks. We demonstrate the effectiveness of the approach in section~\ref{sec:evaluation} on a variety of datasets including our curated ST-Bench verification benchmark.

%% file: sections/2_related_work.tex
\section{Related Work}
\label{sec:related-work}

Formal verification attempts to provide provable guarantees to ensure that a certain property of the system holds. In this particular case, the property to be verified is robustness. Early works formulate the robustness verification as constraint optimisation problem using mixed integer programming (MIP)~\cite{lomuscio2017anapproach,dutta2019reachability,tjeng2019evaluating,botoeva2020efficient}. These works provide soundness guarantees, however, they struggle to verify larger neural networks \cite{sosnin2024certified,sosnin2025abstract}. In addition, these works use MIP to model the entire network composed of several layers as a set of constraints. However, our work focuses on tightening the initial convolution layer of the network perturbations that vary across the temporal dimension to obtain the initial constraints. Further layers of the network rely on the less expensive IBP to aid faster verification.

On the one hand, ~\cite{jordan2O22zonotope,singh2019abstract,balunovic2019certifying} model the networks using polyhedral constraints (surveyed in \cite{huchette2026deep}) and certify robustness for image perturbations, they employ abstract verification techniques that result in loose approximations, especially for higher dimensional inputs, whereas on the other hand, ~\cite{wang2023towards} tackles the problem from a different angle by performing geometric transformations iteratively on the inputs and analysing the drop in model accuracy to determine the global worst case transformation. Since it is impossible to practically cover all possible geometric transformations, the soundness of this approach cannot be guaranteed. \CR{Other sound bound-propagation-based methods explore certification of finite-horizon robustness properties \cite{wicker2021certification,lechner2021infinite,wicker2024probabilistic} or using mixed propagation strategies for forward and backwards passes \cite{sosnin2026exact,wicker2025certification}. While these methods mix propagation strategies and incorporate temporal certification they do not consider the shared constraints that we impose in this work.}

There are not many works that explore robustness verification of video classifiers and spatio-temporal systems, given their limited ability to scale for large scale inputs. While~\cite{mummadi2019defending} proposes shared adversarial perturbations, the approach focuses on adversarial defence. \cite{hu2023robustness} proposes robustness verification, however, this work focuses on motion camera perturbations on point cloud image frames. \cite{lee2023defending} proposes an adversarial defence model for activity recognition models based on PGD. \CR{Prior works including \cite{wu2020robustness} provide robustness guarantees for video classifiers via local Lipschitz-based reasoning (based on \cite{wicker2018feature}) together with a Monte Carlo–style search over adjacent frames, certifying small networks under per-frame $\ell_\infty$ perturbations. Related complementary results on the formal foundations for video-domain verification appear in~\cite{wu2020gamebased}. In contrast, our STBP framework exploits the closed-form structure of shared spatio-temporal perturbations to obtain exact first-layer bounds and combines them with sound IBP/Lipschitz propagation for deeper layers. More recently, \cite{zhang2025h2v} scaled robustness validation to large networks (including ResNets, vision transformers, and 3D CNNs for video classification) via H\"older optimisation over a Hilbert space-filling reformulation, but addresses a different threat model -- geometric perturbations rather than the shared, frame-correlated $\ell_\infty$ perturbations targeted by STBP.} This work proposes robustness verification of spatio-temporal systems, including video recognition models.

Recent advances in improving robustness incorporate certified training, predominantly employing IBP for verification~\cite{de2024expressive,mao2023connecting,muller2023certified}. While these works do not consider spatio-temporal systems, we propose spatio-temporal attacks that employ shared constraints. While the attack strength may be lower than the attacks proposed in these methods, this allows modelling realistic adversaries for spatio-temporal systems and enables robustness verification to scale to higher dimensional inputs. 

In addition to this, the prior work~\cite{chiang2020certified} explores adversarial patches limited to images in relatively smaller datasets. By incorporating adversarial patches with shared adversarial perturbations better scalability to spatio-temporal systems with real-world $l_\infty$ perturbations is achieved. VideoStar~\cite{sasaki2025robustness} introduced as an extension of the ImageStar~\cite{tran2020cav} for formal verification of video classification neural networks. VideoStar represents reachable sets as tuples $(c, V, P, l, u)$, where $c$ is an anchor video, $V$ is a set of generator videos, and $P$ encodes linear constraints on the combination coefficients. 

\looseness=-1
Lipschitz continuity has long been used to reason about robustness and verification of neural networks by relating bounded sensitivity to certified guarantees under norm-bounded perturbations~\cite{szegedy2014intriguing,hein2017formal}. Prior work focused on efficiently estimating or bounding Lipschitz constants of deep models, including spectral norm–based bounds for feed-forward and convolutional networks~\cite{virmaux2018lipschitz} and convex or semidefinite relaxations for tighter estimates~\cite{fazlyab2019efficient}. These bounds have been leveraged to derive robustness margins and verification guarantees, for instance through Lipschitz-margin training~\cite{tsuzuku2018lipschitz} or constrained architectures such as skew-orthogonal convolutions~\cite{singla2021skew}. More recently, layer-wise Lipschitz propagation has enabled scalable certification for large models and structured perturbations~\cite{massena2025fastrobcerts}. While such methods scale well, they often yield conservative bounds due to uniform worst-case sensitivity. In contrast, our approach combines an exact closed-form characterization of the first spatio-temporal convolutional layer with Lipschitz-based propagation in deeper layers, reducing conservatism while preserving scalability.

%% file: sections/3_preliminaries.tex
\section{Preliminaries}
\label{sec:preliminaries}

\looseness=-1
We consider a Feed-Forward Neural Networks (FFNN) that perform classification of temporal sequences. In particular, our focus is on video classification systems that takes in a set of input video frames as input. Formally, we define the network $f_{\theta}$ with parameters $\theta$ as $f_{\theta}: \CR{\mathbb{R}^{C \times D \times H \times W}} \rightarrow \mathbb{R}^{M}$ where \CR{$C$ denotes the number of channels (for RGB, $C{=}3$), $D$ the temporal depth (number of frames per video), $H$ the height, and $W$ the width of each input frame}; $M$ denotes the number of output classes. The network assigns a class $i \in \{1, 2, \dots, M\}$ if $f_{\theta}(x)_i > f_{\theta}(x)_j\ \forall i \neq j$.

\looseness=-1
Given the FFNN $f_{\theta}$ and the input video (represented by several image frames)  $X$, the clean prediction of the network is: $\hat{y} = f_{\theta}(x)$.
The FFNN is composed of intermediate layers. We denote $x_0$ as the input layer. The final output of the network is denoted by $y_\textsubscript{target}$. The neural network is composed of a series of transformations $h_l$ within each of its $L$ layers. 
The output of each layer $z_l$ can be expressed as $z_l = h_l(x_\textsubscript{$l-1$})$ for $l = 1, 2, ... L$. The transformations $h_l$ can be an affine transformation or a monotonic function such as ReLU or sigmoid. We represent the affine transformation as $h_l(x_\textsubscript{$l-1$}) = W_l\ \cdot  x_{l-1} + b_l$, where $W_l \in \CR{\mathbb{R}^{C \times D \times H \times W}}$ is the weight of the matrix and $b_l$ is the bias vector for the layer $l$.

\subsection{Robustness Verification}
\looseness=-1
Verification techniques validate whether neural networks satisfy certain specifications. In this setting, we focus on ensuring that the adversarial robustness specification holds.

\looseness=-1
In order to verify the robustness of $f_{\theta}$ against adversarial inputs, we apply perturbations on the input video $X \in \CR{\mathbb{R}^{C \times D \times H \times W}}$, where \CR{$C$ denotes the number of channels and $D$ the temporal depth (number of frames)}. We observe the outputs of $f_{\theta}$ under the adversarial perturbation defined by the budget $\epsilon$. Thus, verification proceeds by searching for a counter example that violates the specification or until all the samples satisfy the specification constraint.
As defined by \cite{gowal2018ibp}, the network is called robust at a point $x_0$ if no adversarial example can cause the prediction of the network to change from its target label for all $z_0 \in \chi(x_0)$, where $\chi(x_0)$ is a set of inputs around $x_0$. This is expressed formally for each class $y$ as $(e_{\hat{y}'} - e_{\hat{y}})^T z_l \leq 0 \quad \forall z_0 \in \chi(x_0) = \{ x | \norm{x - x_0}_\infty < \epsilon \}$.

\looseness=-1
We apply a perturbation $\delta$ bounded by $\epsilon$ to the input video frames $X$ within a norm $\ell_\infty $-norm or $ \ell_2 $-norm $\epsilon \geq 0$, \ $\epsilon$ being the maximum allowed perturbations of each frame in the video.

For the adversarial input $X^\text{adv} = X + \delta$, the output of the network~\cite{goodfellow2014explaining} is defined by $\hat{y}' = f_{\theta}(X^\text{adv}).$

For a robust network, the input with perturbations $X +\delta$ maintains the same prediction $\hat{y}' = \hat{y}$

\subsection{Spatio-Temporal Systems}
\looseness=-1
Spatio-temporal systems refer to neural network architectures that process inputs exhibiting inherent correlations along both spatial and temporal dimensions. Formally, such systems implement mappings $f : \mathbb{R}^{C \times D \times H \times W} \rightarrow \mathcal{Y}$, where $C$ denotes the number of input channels, $D$ the temporal depth (e.g., number of frames or slices), and $H \times W$ the spatial resolution. Prominent examples include video classification networks processing sequential frames captured over time, and volumetric medical imaging models analyzing sequential slices from magnetic resonance imaging (MRI) or computed tomography (CT) scans. A distinguishing characteristic of spatio-temporal data is that realistic perturbations - whether arising from sensor noise, environmental artifacts, or adversarial manipulation - are not independent across the temporal axis. For instance, an adversarial patch affixed to a physical object (e.g., a manipulated road sign or bumper sticker) induces perturbations that are spatially localized and temporally correlated across consecutive frames. This structure reduces the effective adversarial degrees of freedom compared to unconstrained per-pixel perturbations, a property that can be exploited to obtain tighter verification bounds than those achievable with standard interval bound propagation methods that treat all input dimensions as independent.

\subsection{Challenge with Standard IBP in Spatio-Temporal Settings}

\looseness=-1
Interval Bound Propagation~\cite{gowal2018ibp} becomes inefficient when applied to inputs with high correlation, such as videos. In video classification, the input is a sequence of  \CR{$D$} frames. Due to high correlation along the temporal axis, the independent propagation of bounds for each frame leads to an \textit{exponential} or \textit{polynomial} increase in the size of the propagated bounds.

\looseness=-1
For example, let \CR{$D$} be the number of frames in a video, and suppose the perturbation is applied independently to each frame. The propagated bounds across all frames grow as \CR{$O(D^d)$}, where $d$ is the input dimension. This results in bounds that are overly conservative and difficult to scale for large networks. To address this, we can introduce shared constraints across the temporal axis, allowing for more efficient certification in video models.

\subsection{Zonotope Abstraction in Spatio-Temporal Settings}
\looseness=-1
Zonotope abstractions~\cite{singh2018fast} offer a principled alternative to IBP by representing the set of reachable activations as a zonotope $Z = \{c + \sum_{i=1}^{m} \alpha_i g_i \mid \alpha_i \in [-1,1]\}$, where $c$ is the center and $g_i$ are generators encoding directions of uncertainty. For $\ell_\infty$ perturbations on input $x$, the perturbed input is represented as $x' = x + \sum_{i=1}^{n} \varepsilon \cdot e_i \cdot \alpha_i$, where $e_i$ are unit vectors and each $\alpha_i$ is an independent noise symbol. Passing this through a linear layer $z = Wx + b$ yields $z = Wx + b + \sum_i \varepsilon \cdot W_{\cdot,i} \cdot \alpha_i$, which recovers exactly the IBP interval bound $z \in [Wx + b - \varepsilon |W| \mathbf{1},\ Wx + b + \varepsilon |W| \mathbf{1}]$. Whilst zonotopes can yield tighter bounds than IBP through the propagation of correlated noise symbols, we do not adopt this approach for two reasons.

\looseness=-1
\textbf{Closed-form bounds on the initial convolution are tighter.} Our method exploits the structure of spatio-temporal inputs by deriving \textit{closed-form} analytical bounds through the first convolutional layer. These bounds directly account for the spatial support and temporal redundancy of the perturbation set, yielding significantly tighter enclosures than what zonotope propagation achieves at this stage. Since the dominant source of looseness in bound propagation is the first few layers, replacing zonotope abstraction here with an exact closed-form expression provides a qualitatively better starting point for the subsequent propagation.

\looseness=-1
\textbf{Zonotope propagation 
is computationally prohibitive.} The number of generators in a zonotope grows with each layer as new affine dependencies are introduced, making the representation increasingly expensive to maintain. For video inputs of dimension \CR{$C \times D \times H \times W$}, the initial zonotope already contains \CR{$O(C \cdot D \cdot H \cdot W)$} generators, and this count is compounded through successive layers. In practice, propagating a zonotope through an entire 3D convolutional network incurs a runtime cost that substantially exceeds our proposed approaches. Specifically, our IBP-based and Lipschitz-based certification methods complete in a fraction of the time required by zonotope propagation, making them far more suitable for the spatio-temporal networks considered in this work. We note that our proposed L\"{o}wner-John (LJ) ellipsoid sampling method operates under a different computational regime, as it draws samples from the minimum-volume ellipsoid enclosing the certified input set rather than propagating a closed symbolic representation, and is therefore not directly compared on this dimension.

%% file: sections/4_method.tex
\section{Method}
\label{sec:method}

We consider a supervised learning setting in which a model \( f: \CR{\mathbb{R}^{C \times D \times H \times W}} \to \mathcal{Y} \) maps an input tensor \( \mathbf{x} \in \CR{\mathbb{R}^{C \times D \times H \times W}} \) to an output in a label space \( \mathcal{Y} \). Here, \( C \) denotes the number of channels, \CR{\( D \) the length of the temporal or sequential axis (e.g., video frames or imaging slices), and \( H,W \) the spatial height and width}. The model \( f \) is typically realized as a deep neural network composed of alternating linear transformations and non-linear activations. We assume a data distribution \( \mathcal{D} \) over input-label pairs \( (\mathbf{x}, y) \in \CR{\mathbb{R}^{C \times D \times H \times W}} \times \mathcal{Y} \), and we are interested in verifying the behaviour of \( f \) not only on individual inputs but over neighbourhoods defined by allowable perturbations.

To formalize this, we define an input specification \( T(\mathbf{x}) \subseteq \CR{\mathbb{R}^{C \times D \times H \times W}} \) which captures the set of inputs deemed semantically equivalent to a given input \( \mathbf{x} \). The goal of robustness verification is to prove that the model's prediction remains invariant over this set. Specifically, we seek to certify that \( f(\mathbf{x}') = f(\mathbf{x}) \) for all \( \mathbf{x}' \in T(\mathbf{x}) \). When such a certificate can be produced, it provides a formal guarantee that the model is robust to all perturbations allowed by the specification. However, in high-dimensional input spaces, such as those arising in spatio-temporal data, existing verification techniques often fail to scale due to the complexity of propagating bounds through deep models. This motivates the development of new techniques tailored to settings where the input exhibits spatial and temporal structure.

\subsection{Modelling Spatio-Temporal Constraints}
We now define a closed form formulation to compute tight, certified bounds on the activations of the first layer of a neural network under structured perturbations. Let \( \mathbf{x} \in \mathbb{R}^{C \times D \times H \times W} \) be a 4D input tensor, and let \( T(\mathbf{x}) \subseteq \mathbb{R}^{C \times D \times H \times W} \) be an input specification defining the set of valid perturbed inputs, constrained to reflect realistic spatio-temporal consistency.
Let \( \boldsymbol{\delta} \in \mathbb{R}^{n} \) be the flattened perturbation tensor, with \( n = CDHW \). The perturbed input is \( \mathbf{x} + \boldsymbol{\delta} \) denoted by \(\mathbf{x}'\). We impose the following structured constraints on \( \boldsymbol{\delta} \):
\begin{itemize}[
    leftmargin=*,
    itemsep=0pt,
    parsep=0pt,
    topsep=2pt,
    partopsep=0pt
]
	\item \textbf{Bounded perturbations.}
	      For \( i \in \mathcal{B} \subseteq \{1, \dots, n\} \), we impose: $-\epsilon_i \leq \delta_i \leq \epsilon_i,$ where $\epsilon_i > 0$ is the perturbation magnitude for the $i$-th input dimension.
	\item \textbf{Shared perturbations.}
	      For index pairs \( (i, j) \in \mathcal{S} \), we enforce: $\delta_j = \delta_i.$
	\item \textbf{Non-perturbations.}
	      For \( i \in \mathcal{F} \subseteq \{1, \dots, n\} \), we fix: $\delta_i = 0.$
\end{itemize}
These constraints reduce the adversarial degrees of freedom, capturing inductive biases such as temporal smoothness or anatomical consistency across frames. For an affine layer, the pre-activation output is given by $\mathbf{z} = \mathbf{W}(\mathbf{x} + \boldsymbol{\delta}) + \mathbf{b} \in \mathbb{R}^m$.

\subsection{Exact Closed-Form Bounds for the First Layer}

We first analyze the first affine layer \(f^{(1)}\). Let \(\mathbf{W} \in \mathbb{R}^{m \times n}\) and \(\mathbf{b} \in \mathbb{R}^{m}\) denote the effective weight matrix and bias, with \(n = CDHW\). The pre-activation output is $\mathbf{z}^{(1)}(\mathbf{x}') = \mathbf{W}(\mathbf{x} + \boldsymbol{\delta}) + \mathbf{b} = \mathbf{z}^{(1)}_{\mathrm{clean}} + \mathbf{W}\boldsymbol{\delta}$, where \(\mathbf{z}^{(1)}_{\mathrm{clean}} := \mathbf{W}\mathbf{x} + \mathbf{b}\) denotes the clean (unperturbed) pre-activation output of the first layer. For a neuron \(j \in \{1,\dots,m\}\), the perturbation-induced deviation is $\Delta z^{(1)}_j = \sum_{i=1}^{n} w_{j,i} \delta_i$.

\subsubsection{Reduction under Shared Perturbations}

The shared-perturbation constraints \((i,k) \in \mathcal{S}\) enforce \(\delta_i = \delta_k\), inducing equivalence classes over the perturbation indices. Let \(\tilde{\boldsymbol{\delta}} \in \mathbb{R}^{q}\) denote the vector of independent perturbation variables after enforcing these equalities. Then, the perturbation term can be rewritten as $\Delta z^{(1)}_j = \sum_{r=1}^{q} \alpha_{j,r} \tilde{\delta}_r$, where each coefficient \(\alpha_{j,r}\) aggregates all weights \(w_{j,i}\) associated with indices tied to \(\tilde{\delta}_r\). Each independent perturbation satisfies a box constraint $-\epsilon_r \le \tilde{\delta}_r \le \epsilon_r$, with \(\tilde{\delta}_r = 0\) for variables corresponding to \(\mathcal{F}\).

\subsection{Spatio-Temporal Bound Propagation}

Given a certified input specification \( T(\mathbf{x}) \subseteq \mathbb{R}^{n} \) defined by structured perturbation constraints, our goal is to compute sound bounds on the output of a neural network \( f \) under all admissible perturbations \( \mathbf{x}' \in T(\mathbf{x}) \). To achieve this, we introduce \textit{Spatio-Temporal Bound Propagation} (STBP), an exact, closed-form analysis of the first layer with scalable, sound relaxation-based bound propagation for all subsequent layers. This hybrid strategy preserves tightness while ensuring computational tractability for deep architectures. Algorithm~\ref{alg:stbp-overview} provides a high-level overview of the full STBP pipeline.

\begin{algorithm}[H]
    \caption{STBP: Spatio-Temporal Bound Propagation (Overview)}
    \label{alg:stbp-overview}
    \begin{algorithmic}[1]
        \REQUIRE Input $\mathbf{x}$, network $f = f^{(k)} \circ \cdots \circ f^{(1)}$, specification $T(\mathbf{x})$
        \STATE Encode spatio-temporal constraints: shared $\mathcal{S}$, bounded $\mathcal{B}$, fixed $\mathcal{F}$
        \STATE Reduce perturbation to independent variables $\tilde{\boldsymbol{\delta}} \in \mathbb{R}^q$
        \STATE Compute exact first-layer bounds $[\ell^{(1)}, u^{(1)}]$ via closed-form (Theorem~1)
        \STATE Choose propagation method for layers $2, \dots, k$:
        \STATE \quad \textbf{Method A:} IBP (fast, less tight)
        \STATE \quad \textbf{Method B:} Lipschitz propagation (moderate cost, tighter)
        \STATE \quad \textbf{Method C:} Löwner--John ellipsoid sampling \CR{(higher cost, tightest empirical estimate; not strictly sound for finite $N$)}
        \STATE Propagate bounds through layers $2, \dots, k$ to obtain $[\ell^{(k)}, u^{(k)}]$
        \RETURN Certified output bounds $[\ell^{(k)}, u^{(k)}]$
    \end{algorithmic}
\end{algorithm}

Let \( f = f^{(k)} \circ \cdots \circ f^{(1)} \) denote a feedforward network with \( k \) layers, where \(f^{(1)}\) denotes the first affine layer (a spatio-temporal convolution), and each subsequent layer \(f^{(i)}\) for \(i \geq 2\) is either an affine transformation or an element-wise nonlinearity. For each neuron \( j \) in the first layer, we define the closed-form pre-activation bounds for \(f^{(1)}\) as $\ell_j^{(1)} := \min_{\mathbf{x}' \in T(\mathbf{x})} z_j^{(1)}(\mathbf{x}')$ and $u_j^{(1)} := \max_{\mathbf{x}' \in T(\mathbf{x})} z_j^{(1)}(\mathbf{x}')$, where \( z_j^{(1)}(\mathbf{x}') = \mathbf{w}_j^\top \mathbf{x}' + b_j \) is the affine response at unit \( j \). These bounds are computed exactly by solving the closed-form solution described in the previous section, and serve as certified enclosures of the neuron activations over the input specification. Our goal is to compute certified bounds on the network output \(f(\mathbf{x}')\) for all \(\mathbf{x}' \in T(\mathbf{x})\).

\textbf{Closed-Form Optimization on the First Layer} For each neuron \(j\), we consider the optimization problem $\max_{\tilde{\boldsymbol{\delta}}} / \min_{\tilde{\boldsymbol{\delta}}} \sum_{r=1}^{q} \alpha_{j,r} \tilde{\delta}_r \text{ s.t. } -\epsilon_r \le \tilde{\delta}_r \le \epsilon_r$.

\begin{theorem}[Closed-Form First-Layer Bounds]
	For each neuron \(j\) in the first layer, the exact bounds are given by
	\[
		\ell_j^{(1)} = z^{(1)}_{\mathrm{clean},j} - \sum_{r=1}^{q} \epsilon_r |\alpha_{j,r}|, \qquad
		u_j^{(1)} = z^{(1)}_{\mathrm{clean},j} + \sum_{r=1}^{q} \epsilon_r |\alpha_{j,r}|.
	\]
\end{theorem}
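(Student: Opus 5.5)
The plan is to reduce the claim to maximising and minimising a linear function over a box, which separates coordinate by coordinate.

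\textbf{Step 1: rewrite the objective.} By the definition of $\mathbf{z}^{(1)}_{\mathrm{clean}}$ and the reduction under shared perturbations, every admissible $\mathbf{x}' \in T(\mathbf{x})$ satisfies
\[
z^{(1)}_j(\mathbf{x}') = z^{(1)}_{\mathrm{clean},j} + \sum_{r=1}^{q} \alpha_{j,r}\tilde{\delta}_r .
\]
The map $\tilde{\boldsymbol{\delta}} \mapsto \boldsymbol{\delta}$ copies each independent variable onto its equivalence class, and sets every index in $\mathcal{F}$ to zero. It is therefore a bijection between the feasible set of $T(\mathbf{x})$ and the box $\prod_{r}[-\epsilon_r,\epsilon_r]$. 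Variables tied to $\mathcal{F}$ are handled by taking $\epsilon_r = 0$ for them.

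This bijection is the step that needs care. I will check that the bounds are consistent within each equivalence class: either all tied indices carry the same $\epsilon$, or $\epsilon_r$ is the minimum over the class. If the box were not exactly the image of $T(\mathbf{x})$, we would get only a valid bound rather than an exact one. So I will state this as the interpretation of $\epsilon_r$ adopted in the reduction. With it in place, $\ell^{(1)}_j$ and $u^{(1)}_j$ equal the minimum and maximum of the affine objective over the box.

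\textbf{Step 2: bound each term.} The objective is a sum of terms each depending on a single variable, and the box is a product set, so the maximum is the sum of the coordinatewise maxima. For each $r$ and each $|\tilde\delta_r| \le \epsilon_r$,
\[
\alpha_{j,r}\tilde\delta_r \le |\alpha_{j,r}|\,|\tilde\delta_r| \le \epsilon_r|\alpha_{j,r}| .
\]
Summing over $r$ gives the upper bound $u^{(1)}_j \le z^{(1)}_{\mathrm{clean},j} + \sum_r \epsilon_r|\alpha_{j,r}|$. The symmetric inequality $\alpha_{j,r}\tilde\delta_r \ge -\epsilon_r|\alpha_{j,r}|$ gives the matching lower bound.

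\textbf{Step 3: show the bounds are attained.} Exactness follows from exhibiting feasible points that achieve the bounds. Take $\tilde\delta_r^\star = \epsilon_r\,\mathrm{sign}(\alpha_{j,r})$, with the sign of $0$ chosen arbitrarily. This point lies in the box and attains the upper bound term by term. Its negation $-\tilde{\boldsymbol{\delta}}^\star$ is also feasible and attains the lower bound.

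Lifting these points back through the bijection of Step 1 gives explicit inputs in $T(\mathbf{x})$ at which $z^{(1)}_j$ equals $u^{(1)}_j$ and $\ell^{(1)}_j$, so the min and max in the definitions are achieved. The only substantive part of the argument is the correctness of the reduction in Step 1; Steps 2 and 3 are the standard dual-norm ($\ell_1$ versus $\ell_\infty$) identity.
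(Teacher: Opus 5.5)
Your proposal is correct and follows essentially the same route as the paper. The paper observes that a linear objective over the box $\prod_r[-\epsilon_r,\epsilon_r]$ is extremised at the vertex with $\tilde\delta_r=\pm\epsilon_r$ sign-aligned with $\alpha_{j,r}$; your termwise bound $\alpha_{j,r}\tilde\delta_r\le\epsilon_r|\alpha_{j,r}|$, together with explicit attainment, replaces the appeal to LP vertex theory and additionally makes explicit an assumption the paper leaves implicit: exactness needs $\epsilon_r$ to be the minimum of the tied $\epsilon_i$, with $\epsilon_r=0$ for any class meeting $\mathcal{F}$.
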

For completeness, the proof can be found in Appendix~\ref{app:proof-closedform}. In spatio-temporal convolutional layers, the coefficients \(\alpha_{j,r}\) correspond to effective convolutional weights obtained by summing kernel weights across temporal indices linked by \(\mathcal{S}\). The implementation computes these values efficiently using convolutions with absolute-valued kernels.

\subsubsection{Spatio-Temporal Bound Propagation Beyond the First Layer}
Exact analysis becomes intractable beyond the first layer due to non-linearities and dimensionality growth. We therefore propagate bounds forward using sound relaxation techniques. The closed-form bounds $[\ell^{(1)}, u^{(1)}]$ from the first layer serve as the anchor for three concrete verification pipelines:
\begin{enumerate}
    \item \textbf{STBP-IBP:} Closed-form first layer $\rightarrow$ standard interval bound propagation (fastest; axis-aligned boxes).
    \item \textbf{STBP-Lip:} Closed-form first layer $\rightarrow$ Lipschitz-based propagation via spectral norms ($\ell_2$-ball enclosure).
    \item \textbf{STBP-LJ:} Closed-form first layer $\rightarrow$ L\"owner--John ellipsoid sampling \CR{used as an \emph{empirical tightness oracle} for the reachable set} (\CR{tightest in practice; ellipsoidal enclosure with boundary sampling, not strictly sound for finite~$N$}).
\end{enumerate}
All three pipelines share the same exact first-layer analysis and differ only in how they propagate bounds through the remaining layers, trading off tightness against computational cost. \CR{Of the three, STBP-IBP and STBP-Lipschitz produce strictly sound certificates (Theorems~\ref{thrm:lipshitz-soundness} and standard IBP soundness); STBP-LJ is included as an empirical tightness reference for the reachable set (cf.\ Remark~\ref{rem:lj-empirical}).}

Let $[\ell^{(1)}, u^{(1)}]$ denote the certified bounds on the first-layer pre-activations obtained via the closed-form analysis. For all subsequent layers $i \geq 2$, we seek to compute bounds $[\ell^{(i)}, u^{(i)}]$ such that $f^{(i)} \circ \cdots \circ f^{(1)}(\mathbf{x}') \in [\ell^{(i)}, u^{(i)}] \quad \forall \mathbf{x}' \in T(\mathbf{x})$.
We now describe each approximation strategy.

\paragraph{Interval Bound Propagation (IBP)} IBP represents the reachable set at each layer as an axis-aligned hyperrectangle. Given certified bounds \([\ell^{(i-1)}, u^{(i-1)}]\), we define the center–radius representation: $\mathbf{c}^{(i-1)} = \tfrac{1}{2}(\ell^{(i-1)} + u^{(i-1)})$ and $\mathbf{r}^{(i-1)} = \tfrac{1}{2}(u^{(i-1)} - \ell^{(i-1)})$. For an affine layer $f^{(i)}(\mathbf{z}) = \mathbf{W}^{(i)} \mathbf{z} + \mathbf{b}^{(i)}$, bounds propagate as $\mathbf{c}^{(i)} = \mathbf{W}^{(i)} \mathbf{c}^{(i-1)} + \mathbf{b}^{(i)}$ and $\mathbf{r}^{(i)} = |\mathbf{W}^{(i)}| \mathbf{r}^{(i-1)}$. For ReLU non-linearities, bounds are updated element-wise using monotonicity: $\ell^{(i)} = \max(0, \mathbf{c}^{(i)} - \mathbf{r}^{(i)})$ and $u^{(i)} = \max(0, \mathbf{c}^{(i)} + \mathbf{r}^{(i)})$.

\paragraph{Lipschitz-Based Bound Propagation}

Instead of propagating coordinate-wise intervals, this method approximates global sensitivity via Lipschitz constants. For each affine layer, we compute a spectral-norm Lipschitz constant $L_i = \sigma(\mathbf{W}^{(i)})$, and accumulate $L_{\mathrm{total}} = \prod_{i=2}^k L_i$. Given bounds \([\ell^{(1)}, u^{(1)}]\), we propagate the center exactly while scaling the radius: $\mathbf{c}^{(i)} = f^{(i)}(\mathbf{c}^{(i-1)})$ and $\|\mathbf{r}^{(i)}\|_2 \le L_i \|\mathbf{r}^{(i-1)}\|_2$. ReLU layers preserve Lipschitz continuity with constant 1.

\begin{theorem}[Soundness of Lipschitz Bound Propagation after Closed-Form Analysis]
\label{thrm:lipshitz-soundness}
Let $f = f^{(k)} \circ \cdots \circ f^{(1)}$ be a neural network, where the first
layer $f^{(1)}$ admits an exact closed-form bound computation under an
$\ell_\infty$-bounded perturbation set $T(\mathbf{x}) = \{ \mathbf{x}' : \|\mathbf{x}' - \mathbf{x}\|_\infty \le \epsilon \}.$
Suppose the remaining subnetwork $\tilde{f} := f^{(k)} \circ \cdots \circ f^{(2)}$ is $L_{\mathrm{total}}$-Lipschitz with respect to the $\ell_2$-norm.Let $[\ell^{(1)}, u^{(1)}]$ denote the exact first-layer bounds obtained via the
closed-form analysis, and let $\mathbf{c}^{(1)} = \tfrac{1}{2}(\ell^{(1)} + u^{(1)}),$ and
$\mathbf{r}^{(1)} = \tfrac{1}{2}(u^{(1)} - \ell^{(1)});$
then, for all $\mathbf{x}' \in T(\mathbf{x})$, the network output satisfies
\[
f(\mathbf{x}') \in \left[ f(\mathbf{c}^{(1)}) \pm L_{\mathrm{total}}\sqrt{d}\,\|\mathbf{r}^{(1)}\|_\infty \mathbf{1} \right]
\]
where $d$ denotes the dimensionality of the input to $\tilde{f}$ and
$\mathbf{1}$ is the all-ones vector.
\end{theorem}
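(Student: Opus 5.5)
The plan is to split the argument at the interface between the first layer and the remaining subnetwork $\tilde f$. I read the centre term $f(\mathbf{c}^{(1)})$ in the statement as $\tilde f(\mathbf{c}^{(1)})$: the first-layer centre is pushed through layers $2,\dots,k$, matching $\mathbf{c}^{(i)} = f^{(i)}(\mathbf{c}^{(i-1)})$ in the Lipschitz propagation scheme. I make this identification explicit at the start, since it is the only way the expression type-checks.

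The first step uses the Closed-Form First-Layer Bounds theorem. For every $\mathbf{x}' \in T(\mathbf{x})$ and every neuron $j$, it gives $\ell^{(1)}_j \le z^{(1)}_j(\mathbf{x}') \le u^{(1)}_j$. By the definitions of $\mathbf{c}^{(1)}$ and $\mathbf{r}^{(1)}$, this is equivalent to $|z^{(1)}_j(\mathbf{x}') - c^{(1)}_j| \le r^{(1)}_j \le \|\mathbf{r}^{(1)}\|_\infty$ for every $j$. Summing squares over the $d$ coordinates then gives
\[
\|\mathbf{z}^{(1)}(\mathbf{x}') - \mathbf{c}^{(1)}\|_2 \le \sqrt{d}\,\|\mathbf{r}^{(1)}\|_\infty .
\]
If needed, the intermediate bound $\|\mathbf{r}^{(1)}\|_2$ is tighter and could be stated as a refinement.

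The second step applies the $L_{\mathrm{total}}$-Lipschitz hypothesis on $\tilde f$ with $f(\mathbf{x}') = \tilde f(\mathbf{z}^{(1)}(\mathbf{x}'))$. This yields
\[
\|f(\mathbf{x}') - \tilde f(\mathbf{c}^{(1)})\|_2 \le L_{\mathrm{total}} \sqrt{d}\,\|\mathbf{r}^{(1)}\|_\infty .
\]
Since every coordinate of a vector is bounded in absolute value by its $\ell_2$ norm, each output coordinate lies within $L_{\mathrm{total}}\sqrt{d}\,\|\mathbf{r}^{(1)}\|_\infty$ of the corresponding coordinate of $\tilde f(\mathbf{c}^{(1)})$. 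This is exactly the claimed box.

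I would add a short justification that the computed $L_{\mathrm{total}} = \prod_{i\ge 2}\sigma(\mathbf{W}^{(i)})$ is a valid Lipschitz constant for $\tilde f$. Affine maps have $\ell_2$-Lipschitz constant equal to the spectral norm of their weight, ReLU (and other 1-Lipschitz monotone activations) contribute a factor of $1$, and Lipschitz constants are submultiplicative under composition. The main obstacle is not analytic but one of bookkeeping: making the centre/notation convention precise, and checking that $d$ is the dimension of the first-layer output, i.e.\ the input to $\tilde f$. Once these are fixed, the argument is a direct chain of the norm inequalities above.
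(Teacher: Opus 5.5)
Your proposal is correct and follows the paper's proof step for step: the closed-form bounds give a box around $\mathbf{c}^{(1)}$, this is converted to an $\ell_2$ ball via the factor $\sqrt{d}$, the Lipschitz hypothesis on $\tilde f$ is applied, and the result is converted back using $\|\cdot\|_\infty \le \|\cdot\|_2$, with the centre term read as $\tilde f(\mathbf{c}^{(1)})$ exactly as the paper's proof does. Your extra check that $\prod_{i\ge 2}\sigma(\mathbf{W}^{(i)})$ is a valid Lipschitz constant is correct but not needed, since the theorem takes $L_{\mathrm{total}}$-Lipschitzness of $\tilde f$ as a hypothesis.
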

The full proof sketch of Theorem~\ref{thrm:lipshitz-soundness} is available in Appendix~\ref{app:proof-lipschitz}. A fundamental challenge in applying Lipschitz-based verification arises from the mismatch between the underlying normed spaces. While Lipschitz continuity is naturally defined with respect to the $\ell_2$-norm, the closed-form analysis of the first layer in our framework is derived under $\ell_\infty$-bounded perturbations. To maintain a consistent perturbation model and preserve soundness, we map the certified $\ell_\infty$-norm bounds obtained from the first layer into the $\ell_2$-norm space prior to Lipschitz bound propagation, and subsequently convert the resulting bounds back to the $\ell_\infty$ space. Although these norm conversions may slightly loosen the resulting bounds, they ensure that the verification procedure remains sound with respect to the original perturbation specification.

\paragraph{\CR{Löwner-John Ellipsoid Sampling (Empirical Tightness Oracle)}}

This approach represents the reachable set after the first layer as an ellipsoid $\mathcal{E} = \{ \mathbf{c}^{(1)} + \mathbf{A}\mathbf{u} : \|\mathbf{u}\|_2 \le 1 \},$ where $\mathbf{A}$ is derived from the first-layer radius. The Löwner-John ellipsoid is the minimum-volume ellipsoid enclosing the interval box.

The ellipsoid is propagated through the network by sampling points on its boundary, evaluating the network forward pass, and constructing an enclosing box over the sampled outputs. \CR{We use this procedure as an \emph{empirical tightness oracle}: it produces a statistical estimate of the reachable set $\{ f(\mathbf{z}) : \mathbf{z} \in \mathcal{Z}^{(1)} \}$ that approaches the true bounds as the number of boundary samples $N$ grows, but finite uniform sampling does not by itself certify enclosure of $f(\partial \mathcal{E})$ (see Remark~\ref{rem:lj-empirical}); STBP-IBP and STBP-Lipschitz remain our sound verification pipelines.} Algorithm~\ref{alg:lj} \CR{summarises the procedure}.
\begin{algorithm}[H]
    \caption{\CR{Löwner-John Ellipsoid Sampling (Empirical Tightness Oracle)}}
    \label{alg:lj}
    \begin{algorithmic}[1]
        \REQUIRE Center $\mathbf{c}^{(1)}$, radius $\mathbf{r}^{(1)}$, number of samples $N$
        \STATE Construct ellipsoid $\mathcal{E}$ enclosing $[\ell^{(1)}, u^{(1)}]$
        \FOR{$j = 1$ to $N$}
        \STATE Sample $\mathbf{u}_j$ uniformly from unit sphere
        \STATE Compute $\mathbf{z}_j = f(\mathbf{c}^{(1)} + \mathbf{A}\mathbf{u}_j)$
        \ENDFOR
        \STATE \CR{Compute coordinate-wise min/max of $\{\mathbf{z}_j\}_{j=1}^N$}
        \RETURN \CR{Empirical bound estimate}
    \end{algorithmic}
\end{algorithm}

\begin{theorem}[Soundness of Löwner--John Ellipsoid Propagation]
Let $f = f^{(k)} \circ \cdots \circ f^{(2)}$ denote the subnetwork following the
first layer, and let $[\ell^{(1)}, u^{(1)}]$ be the exact first-layer bounds
obtained via the closed-form analysis under an $\ell_\infty$-bounded perturbation
set $T(\mathbf{x})$. Let $\mathcal{Z}^{(1)} \subset \mathbb{R}^d$ denote the exact
reachable set of first-layer activations, and let $\mathcal{E}$ be the
Löwner--John ellipsoid of $\mathcal{Z}^{(1)}$, i.e., the minimum-volume ellipsoid
such that $\mathcal{Z}^{(1)} \subseteq \mathcal{E}.$ Assume that the bounds on the network output are 
computed by evaluating $f$ on a set of points whose convex hull contains $f(\partial \mathcal{E})$, where
$\partial \mathcal{E}$ denotes the boundary of $\mathcal{E}$. Then the resulting
output bounds constitute a sound enclosure of $\{ f(\mathbf{z}) : \mathbf{z} \in \mathcal{Z}^{(1)} \}$. A detailed proof is provided in Appendix~\ref{app:proof-ljsampling}.
\end{theorem}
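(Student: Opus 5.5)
The plan is to chain three inclusions and then read off coordinate-wise bounds. First, by the closed-form analysis (Theorem~1) every first-layer activation reachable from $T(\mathbf{x})$ lies in $\mathcal{Z}^{(1)}$, and by definition of the L\"owner--John ellipsoid $\mathcal{Z}^{(1)} \subseteq \mathcal{E}$. Applying $f$ gives
\[
\{ f(\mathbf{z}) : \mathbf{z} \in \mathcal{Z}^{(1)} \} \subseteq f(\mathcal{E}).
\]
Second, I would pass from $f(\mathcal{E})$ to the convex hull $\mathrm{conv}\{f(\mathbf{p}_1),\dots,f(\mathbf{p}_N)\}$ of the evaluated points $\mathbf{p}_j$. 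Third, I would use that for each output coordinate $m$ the linear functional $\mathbf{y} \mapsto y_m$ attains its extrema over a polytope at its generating points. Hence the coordinate-wise minimum and maximum of $\{f(\mathbf{p}_j)\}$ equal the minimum and maximum over the convex hull. The box returned by Algorithm~\ref{alg:lj} therefore contains the whole hull and hence the reachable set.

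The main obstacle is the second step: the hypothesis only controls $f(\partial\mathcal{E})$, whereas the first step produces $f(\mathcal{E})$, which includes the image of the interior. For a nonlinear $f$ these differ. A ReLU network can realise $z \mapsto -|z_1| = -\mathrm{ReLU}(z_1) - \mathrm{ReLU}(-z_1)$, whose maximum over a ball centred at the origin is attained at the interior point $0$ and not on the boundary. So $f(\mathcal{E}) \subseteq \mathrm{conv} f(\partial\mathcal{E})$ fails in general, and the argument cannot go through on the stated hypothesis alone.

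I would first try to close this gap by a maximum-principle argument: if each output coordinate of $f$ were quasi-convex (or merely had no interior local maxima or minima on $\mathcal{E}$), then every extremum over the compact set $\mathcal{E}$ would lie on $\partial\mathcal{E}$, and the boundary hypothesis would suffice. Since this fails for general ReLU networks, as the example above shows, I expect to have to strengthen the hypothesis instead. The natural replacement is that the convex hull of the evaluated points contains $f(\mathcal{E})$ rather than only $f(\partial\mathcal{E})$; in practice this means adding the centre $\mathbf{c}^{(1)}$ and interior samples to the evaluation set. Under that reading the three-step chain above is immediate, and I would state explicitly that the boundary-only version additionally requires the extremal property of $f$ on $\mathcal{E}$ just described.

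I would also note that, consistent with Remark~\ref{rem:lj-empirical}, finitely many uniform samples do not by themselves verify the hull-containment hypothesis. The theorem is therefore a conditional soundness statement, not a certificate produced by Algorithm~\ref{alg:lj}.
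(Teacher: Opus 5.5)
Your diagnosis is correct, and your three-step chain is the same skeleton as the paper's proof. The paper also uses $\mathcal{Z}^{(1)}\subseteq\mathcal{E}$, then the hull hypothesis, then reads off coordinate-wise min/max. The step you flag is exactly where the paper's argument breaks. The paper bridges it by asserting that, because $f$ is continuous and $\mathcal{E}$ is compact, the extrema of each output coordinate over $\mathcal{E}$ are attained on $\partial\mathcal{E}$. Continuity and compactness give existence of the extrema, not their location, so this is false for ReLU networks. Your strengthened hypothesis $\mathrm{conv}\{f(\mathbf{p}_j)\}\supseteq f(\mathcal{E})$ is what the chain actually needs. Your final paragraph is also the correct reading of the sampling issue. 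The paper's proof instead ends by calling finite boundary sampling a conservative over-approximation that tightens as samples are added. That is backwards: a finite sample only under-approximates the range of $f$ on $\partial\mathcal{E}$, and extra samples can only widen the min/max box.

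One concrete correction: your counterexample $z\mapsto -|z_1|$ works only for $d=1$. For $d\ge 2$ the hyperplane $z_1=0$ through the centre meets $\partial\mathcal{E}$, so the maximum $0$ is also attained on the boundary, and in fact $f(\mathcal{E})=f(\partial\mathcal{E})$. A counterexample valid in every dimension is
\[
g(\mathbf{z}) = -\|\mathbf{z}-\mathbf{c}^{(1)}\|_1 = -\sum_{i=1}^{d}\bigl(\mathrm{ReLU}(z_i-c^{(1)}_i)+\mathrm{ReLU}(c^{(1)}_i-z_i)\bigr),
\]
which is an affine--ReLU--affine network.
\begin{itemize}
\item The point $\mathbf{c}^{(1)}=\mathbf{z}^{(1)}_{\mathrm{clean}}$ is reached by $\tilde{\boldsymbol{\delta}}=\mathbf{0}$ and is the centre of symmetry of the zonotope $\mathcal{Z}^{(1)}$.
\item It is therefore an interior point of $\mathcal{Z}^{(1)}\subseteq\mathcal{E}$ whenever $\mathcal{Z}^{(1)}$ is full-dimensional, which is also what makes the minimum-volume ellipsoid well defined.
\item Hence $g<0$ on the compact set $\partial\mathcal{E}$.
\item Evaluating $g$ on all of $\partial\mathcal{E}$, or just on two boundary points attaining $\min_{\partial\mathcal{E}}g$ and $\max_{\partial\mathcal{E}}g$, satisfies the hypothesis.
\item Yet the returned interval has upper end $\max_{\partial\mathcal{E}}g<0$ and so misses $g(\mathbf{c}^{(1)})=0$.
\end{itemize}
So the statement itself, not just its proof, fails as written, including in the high-dimensional regime the paper cares about.

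Finally, quasi-convexity alone only moves the maximum to $\partial\mathcal{E}$; for example, $\|\mathbf{z}\|_2$ has an interior minimum. Your sufficient condition also needs quasi-concavity, or simply the requirement that both coordinate extrema over $\mathcal{E}$ are attained on $\partial\mathcal{E}$.
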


\begin{remark}[Practical interpretation of STBP-LJ]
\label{rem:lj-empirical}
\CR{The theorem above is stated under the explicit hypothesis that the evaluated points convex-hull-cover $f(\partial \mathcal{E})$. With finite uniform boundary sampling, this hypothesis cannot in general be discharged a~priori, and the coordinate-wise min/max over a finite sample $\{f(\mathbf{z}_j)\}_{j=1}^N$ should therefore be interpreted as a \emph{statistical estimate} of the reachable output set rather than a certified enclosure. In this paper, we accordingly report STBP-LJ as an empirical reachable-set estimator that complements the strictly sound STBP-IBP and STBP-Lipschitz pipelines. 
}
\end{remark}

A challenge of Löwner–John ellipsoid–based sampling lies in the number of samples required to obtain \CR{an informative empirical estimate of} the reachable set. Naive sampling strategies over high-dimensional regions quickly become computationally prohibitive. By restricting sampling to the boundary of a tight Löwner–John ellipsoid that minimally encloses the reachable polytope, we substantially reduce the effective sampling domain while preserving coverage of extremal directions for \CR{the empirical estimate}.

At each layer \( i > 1 \), given lower and upper bounds \( \ell^{(i-1)} \) and \( u^{(i-1)} \) on the input to the layer, we compute valid enclosures for the output interval \( [\ell^{(i)}, u^{(i)}] \) using standard operations. For affine layers, given lower and upper bounds \( \ell^{(i-1)} \) and \( u^{(i-1)} \), we define the center and radius:
\[
\mathbf{c}^{(i-1)} := \tfrac{1}{2}\bigl(\ell^{(i-1)} + u^{(i-1)}\bigr), \quad \mathbf{r}^{(i-1)} := \tfrac{1}{2}\bigl(u^{(i-1)} - \ell^{(i-1)}\bigr).
\]
For an affine transformation weights \( \mathbf{W} \) and bias \( \mathbf{b} \), the output bounds are:
\[
\ell^{(i)} = \mathbf{W}\mathbf{c}^{(i-1)} - |\mathbf{W}|\mathbf{r}^{(i-1)} + \mathbf{b}, \quad u^{(i)} = \mathbf{W}\mathbf{c}^{(i-1)} + |\mathbf{W}|\mathbf{r}^{(i-1)} + \mathbf{b}.
\] where the absolute value is taken element-wise.

\looseness=-1
The distinguishing feature of STBP is that it anchors the verification procedure with the closed form solution on the first layer, while relying on fast and conservative propagation for the remainder of the network using either of the 3 approximation strategies. \CR{Each method trades off tightness and computational cost; STBP-IBP and STBP-Lipschitz preserve formal soundness, while STBP-LJ provides an empirical tightness reference at higher sample cost.} This approach balances precision and scalability, and leverages the structure of spatio-temporal input domains to tighten the relaxation at the input stage. We prove that under normal conditions, our approach should be strictly tighter than naive IBP (defined and proved in Appendix~\ref{app:proof-ibp-greater-than-closedform}).

\paragraph{Generalization to Multiple Affine Layers.}
Although we apply the closed-form solution to the first convolutional layer in our primary experiments, the technique is not inherently limited to a single layer. For any sequence of consecutive affine layers without intervening non-linearities (e.g., Conv$\rightarrow$BatchNorm, when BatchNorm is folded into the convolution), the composition $\mathbf{W}_2(\mathbf{W}_1\mathbf{x} + \mathbf{b}_1) + \mathbf{b}_2$ remains affine, and the shared-perturbation structure is preserved through the mapping. More generally, the closed-form analysis can be extended to the first $k$ affine layers by composing the weight matrices and re-applying the closed-form optimization to the composite linear map. In practice, the tightness benefit of the closed-form is greatest at the first layer, where the spatio-temporal constraints are most directly exploitable and where standard IBP introduces the most over-approximation. We empirically validate this observation in Section~\ref{subsection:perlayerboundwidth} through a per-layer ablation study that compares the bound widths produced by pure IBP and STBP-IBP at every layer of the network, confirming that the first convolutional layer is the dominant contributor to bound tightness and that the advantage propagates through all subsequent layers.

\paragraph{Computational Complexity.}
Table~\ref{tab:complexity} summarises the time complexity of each STBP component, where $m$ is the number of first-layer neurons, $q$ the number of independent perturbation variables, $n_i$ the width of layer $i$, and $N_s$ the number of Löwner--John samples.

\begin{table}[H]
\centering
\small
\begin{tabular}{@{}lll@{}}
\toprule
Component & Time & Space \\
\midrule
Closed-form (layer 1) & $\mathcal{O}(m \cdot q)$ & $\mathcal{O}(m \cdot q)$ \\
IBP propagation & $\mathcal{O}(\sum n_i n_{i+1})$ & $\mathcal{O}(\max n_i)$ \\
Lipschitz propagation & $\mathcal{O}(\sum n_i^2)$ & $\mathcal{O}(\max n_i^2)$ \\
LJ Ellipsoid Sampling & $\mathcal{O}(N_s \cdot \text{fwd})$ & $\mathcal{O}(N_s \cdot n_k)$ \\
\bottomrule
\end{tabular}
\caption{Per-sample time and space complexity of STBP components. \CR{Here $m$ is the number of first-layer neurons, $q$ the number of independent perturbation variables, $n_i$ the width of layer $i$, $N_s$ the number of Löwner--John samples, and $\mathrm{fwd}$ the cost of a single forward pass through the subnetwork $f^{(k)} \circ \cdots \circ f^{(2)}$.}}
\label{tab:complexity}
\end{table}

%% file: sections/5_experiments.tex
\section{Evaluation}
\label{sec:evaluation}
Evaluations on the robustness verification for spatio-temporal systems are described. We consider 3D Convolution based models on various sizes, with the largest model being a custom trained ResNet-18 with 3D Convolutions for videos. All the experiments were conducted on the machine having an AMD EPYC 9334 32-Core CPU with 2 X Nvidia L40 GPUs, each having 48GB GPU memory running on Ubuntu 22.04 and kernel version 6.8.0.

\subsection{Datasets and Models}
\label{subsec:datasets}
\begin{enumerate}[
    leftmargin=*,
    itemsep=0pt,
    parsep=0pt,
    topsep=2pt,
    partopsep=0pt
]
	\item \textbf{MNIST~\cite{lecun2005MNIST}:} We construct a video-based toy dataset using the MNIST digit recognition dataset.
    To simulate spatio-temporal structure, we concatenate 10 consecutive digit frames per video and apply fixed transformations to each frame sequence. We evaluate two 3D convolutional models: one with an input resolution \(8 \times 8\) with 5 frames and the other with the resolution \(28 \times 28\) with 10 frames. Both models have 9 layers with 3 convolutions, 4 ReLU activations, and 3 fully connected layers. The models achieve clean classification accuracies of 93.10\% and \CR{91.30\%}, respectively.
	\item \textbf{UCF-101~\cite{soomro2012ucf101}:} UCF-101 is a benchmark dataset for video activity recognition, comprising 13,320 YouTube video clips across 101 classes, grouped into five action categories. 
	Each video has a resolution of \(320 \times 240\) pixels. A 3D CNN network with 9 layers—consisting of 2 convolutional layers, 4 ReLU activations, and 3 fully connected layers, trained on the dataset using the 30 sampled frames from each video at an input resolution of \(256 \times 342\) yields a classification accuracy of 50.842\%. To enhance both performance and scalability, the model input resolution was downsampled to \(32 \times 32\) and we restric to $5$ classes, yielding an accuracy of 74.41\%. 
	\item \textbf{Udacity Steering Angle Prediction dataset~\cite{udacity2017udacity}:} We downsample images from the steering angle prediction dataset from \(640 \times 480\) to \(32 \times 32\) and cast the angle prediction class as a three-class classification. Training a model comprised of 10 layers, including 2 convolutional layers, 3 ReLU activations, an adaptive average pooling layer, and 3 fully connected layers with a clean accuracy of 84.239\%. 
	\item \textbf{MEDMNIST~\cite{yang2021medmnistv2}:} Using the same architecture as the MNIST example above, we focus on the Synapse-3D (brain imaging) dataset  from MEDMNIST downsampled to voxels of size \(32 \times 32 \times 32\). On this binary task (excitatory/inhibitory classification) the network achieves 73.01\% accuracy. 
    \item \CR{\textbf{GTSRB~\cite{stallkamp2011gtrsb}:} For direct comparability with VideoStar~\cite{sasaki2025robustness} we evaluate on the German Traffic Sign Recognition Benchmark. Following the VideoStar protocol, we construct 16-frame videos by repeating each traffic-sign image with small affine jitter to simulate camera motion across frames. Inputs are resized to a fixed resolution and we use a compact 3D CNN that matches the architecture-class of VideoStar's reference model; training on the resulting video dataset yields a clean accuracy of 71.23\% (cf.\ 87.5\% reported by VideoStar on still images). Verification uses 215 samples following the original PRSV protocol (see Section~\ref{subsection:VideoStarComparison}).}
\end{enumerate}

\subsection{ST-Bench}
\label{subsec:ST-Bench}
We propose a~\href{https://zenodo.org/records/15426146}{benchmark} for spatio-temporal verification with realistic perturbations based on real-world scenarios. To obtain such perturbations, we use the YOLOv12~\cite{tian2025yolov12} object detection model and generate bounding boxes for objects within the frames. A JSON file is generated by track the movement of various objects. The file contains each tracked object with its corresponding image frames and the bounding box information. 
As opposed to applying adversarial perturbations on the entire sequence of frames, the perturbations are applied to the bounding boxes. This models a real world perturbation and allows better scalability of the verification bounds. 

\subsection{Empirical Evaluation on MNIST toy dataset}
\label{subsec:STBP-evaluation}
To evaluate the efficiency of STBP approach and to identify the best approximation for the subsequent layers of the network, we run experiments to verify the robustness of the MNIST toy video dataset described in~\ref{subsec:datasets} with 10 frames, each frame having 28 x 28 dimension. The input perturbations $\epsilon$ varies from $[0.0001, 3]$. To keep computational overhead minimal, the Löwner-John Sampling approach limits the number of samples to 75 data points on the ellipsoid boundary, per data input. We obtain the following plot as described in Figure~\ref{fig:mnist-stbp-eval}. We see that while using IBP bounds after the first layer provides tighter bounds than IBP on the entire network, the closed form solution with Lipschitz approximation on the subsequent layers is significantly tighter. Furthermore, \CR{the Löwner–John ellipsoid sampling procedure produces a much tighter \emph{empirical} estimate of the reachable set, at the cost of sampling and runtime; we treat this as an oracle on the achievable tightness rather than a certified result (cf.\ Remark~\ref{rem:lj-empirical}).} Overall, the Lipschitz approximation is found to be \CR{the most effective sound verifier in this study and works well with networks optimized for Lipschitz continuity}.

\subsection{Comparison with VideoStar}
\label{subsection:VideoStarComparison}

\begin{table*}[!htbp]
    \begin{center}
        \resizebox{\textwidth}{!}{%
            \begin{tabular}{@{}llcccccccc@{}}
                \toprule
                \multirow{2}{*}{Experiment} & \multirow{2}{*}{Clean Accuracy} & \multicolumn{3}{c}{Model Parameters} & \multicolumn{5}{c}{IBP Rob. Acc. (\%)} \\
                \cmidrule(l){3-5} \cmidrule(l){6-10}
                 & & \multicolumn{1}{l}{Input dim} & \multicolumn{1}{l}{Output dim} & \multicolumn{1}{l}{No. of Samples} & \multicolumn{1}{l}{$\epsilon=0.1$} & \multicolumn{1}{l}{$\epsilon=0.01$} & \multicolumn{1}{l}{$\epsilon=0.001$} & \multicolumn{1}{l}{$\epsilon=0.0001$} & \multicolumn{1}{l}{$\epsilon=0.00001$} \\
                 \midrule
                MNIST Toy Model & 93.1\% & 1 x 5 x 8 x 8 & 10 & 1000 & 0.0 & 0.0 & 1.2 & 53.2 & 89.1 \\
                MNIST Toy Model Large & 91.3\% & 1 x 10 x 28 x 28 & 10 & 1000 & 0.0 & 0.0 & 53.2 & 69.1 & 71.0 \\
                UCF-101 & 74.41\% & 3 x 30 x 32 x 32 & 5 & 500 & 0.0 & 0.0 & 0.0 & 20.03 & 26.36 \\
                Udacity Steering Angle & 84.23\% & 3 x 30 x 32 x 32 & 3 & 1123 & 0.0 & 0.0 & 0.0 & 70.0 & 75.0 \\
                MEDMNIST Synapse3D & 72.88\% & 32 x 32 x 32 & 2 & 177 & 0.0 & 0.0 & 19.77 & 72.88 & 72.88 \\
                MEDMNIST Synapse3D & 73.01\% & 64 x 64 x 64 & 2 & 352 & 0.0 & 0.0 & 0.0 & 73.01 & 73.01 \\ \bottomrule
            \end{tabular}%
        }
    \end{center}
    \caption{Clean accuracy, model parameters, and certified robustness of spatio-temporal models using IBP.}
    \label{tab:model-params}
\end{table*}

\begin{figure*}[htbp]
\centering
\begin{tabular}{cc}
\fbox{\includegraphics[alt={Certified robust accuracy of IBP, STBP-IBP, STBP-Lipschitz, and STBP-Löwner-John on an MNIST video model (10 frames, 28x28) as a function of perturbation magnitude epsilon; STBP variants substantially outperform IBP, with STBP-LJ tightest at high cost.}, width=0.43\textwidth]{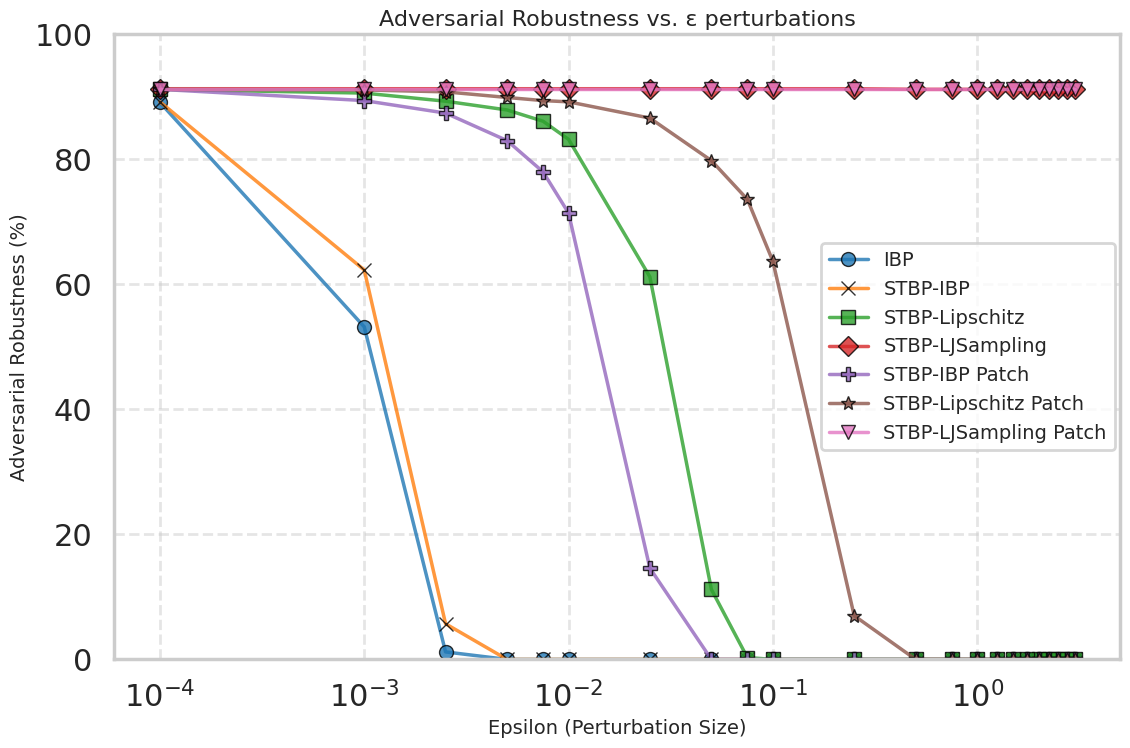}} &
\fbox{\includegraphics[alt={Certified robust accuracy of IBP and STBP variants on the same MNIST video model as a function of adversarial patch size k; the patch-constrained STBP variant retains near-perfect robustness for small patches.}, width=0.47\textwidth]{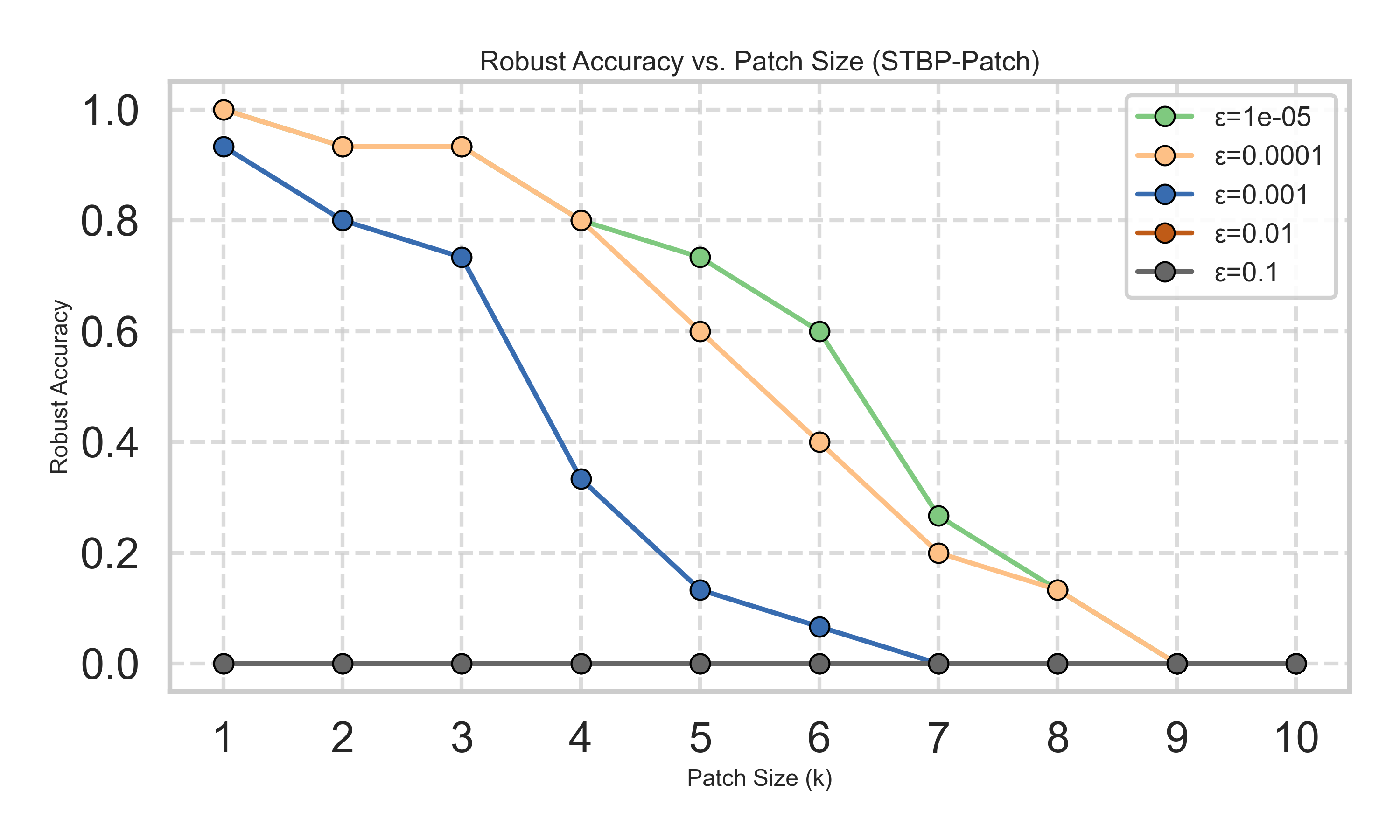}} \\
(a) & (b)
\end{tabular}
\caption{Adversarial robustness of IBP, STBP-IBP, STBP-Lipschitz, and
STBP-L\"owner--John Sampling using adversarial patches for an MNIST video model
with 10 frames of size $28 \times 28$:
(a) robustness against perturbation magnitude ($\epsilon$);
(b) robustness against patch size ($k$).}
\label{fig:mnist-stbp-eval}
\end{figure*}
VideoStar represents reachable sets using symbolic tuples $(c, V, P, l, u)$, where $c$ denotes an anchor video, $V$ a set of generator videos, and $P$ linear constraints on combination coefficients; bounds are propagated by solving linear programs (LPs) for each neuron at every layer. While this formulation supports general linear perturbation constraints, it incurs substantial computational overhead, requiring $2m$ LP solves per layer, which becomes prohibitive for high-dimensional video inputs. In contrast, our method exploits the structure of shared spatio-temporal perturbations to derive a closed-form solution for the first convolutional layer, eliminating LP solving entirely. In the common setting where perturbations are shared across temporal frames, the effective weights reduce to a temporal sum, enabling direct computation of tight bounds via $z_{\mathrm{upper}} = z_{\mathrm{clean}} + \varepsilon \sum |W_{\mathrm{eff}}|$, which is mathematically equivalent to the LP optimum but reduces complexity from $\mathcal{O}(m \cdot \mathrm{poly}(n))$ to $\mathcal{O}(\text{forward\_pass})$ and allows full GPU parallelization. Empirically, we compare both approaches on the GTSRB dataset using 16-frame videos and report Proportion of Robust Samples Verified (PRSV) and runtime over 215 samples, following the VideoStar evaluation protocol. Despite a lower clean accuracy (71.23\% versus 87.5\% reported by VideoStar), our STBP implementation with Löwner–John sampling achieves higher PRSV and significantly lower runtime, as shown in Table~\ref{tab:videostar-stbp}. \CR{We report STBP-LJ in this comparison because it is the tightest of the three STBP variants in our experiments and therefore the most demanding configuration in runtime; PRSV values for STBP-LJ correspond to the empirical tightness oracle of Section~\ref{sec:method} (Remark~\ref{rem:lj-empirical}), while STBP-IBP and STBP-Lipschitz provide sound certificates on the same specifications at comparable or lower runtime.}

\begin{table}[t]
\centering
\scriptsize
\setlength{\tabcolsep}{4pt} 
\begin{tabularx}{\columnwidth}{@{}p{0.32\columnwidth} p{0.22\columnwidth} c c c@{}}
\toprule
\scriptsize
Method & Metric & \multicolumn{3}{c}{Epsilon} \\
 &  & 1/255 & 2/255 & 3/255 \\
\midrule
VideoStar (Relax)$^{*}$ 
& PRSV (\%)        
& 91.16 & 74.88 & 57.21 \\

& Avg.Runtime(s) 
& 222.43 & 459.89 & 1044.40 \\

STBP (LJ Sampling) 
& PRSV (\%)        
& \textbf{99.98} & \textbf{93.1} & \textbf{91.2} \\

& Avg.Runtime(s) 
& \textbf{0.10} & \textbf{0.13} & \textbf{0.17} \\
\bottomrule
\end{tabularx}
\caption{Comparison of VideoStar against STBP for the GTSRB dataset.
$^{*}$VideoStar results are taken from~\cite{sasaki2025robustness}; see text for discussion on hardware comparability.}
\label{tab:videostar-stbp}
\end{table}

\looseness=-1
\noindent\textbf{Remark on hardware comparability.} VideoStar runtimes are as reported in~\cite{sasaki2025robustness}. While the experiments were not conducted on identical hardware, the observed runtime gap exceeds three orders of magnitude (e.g., 222s vs. 0.1s at $\epsilon{=}1/255$), which cannot be attributed to hardware differences alone. The fundamental source of this gap is algorithmic: VideoStar requires $2m$ LP solves per layer per sample, whereas STBP computes bounds in a single forward pass.

\subsection{Experiments}
\label{subsection:Evaluations}

\begin{table*}[!htbp]
    \begin{center}
        \resizebox{\textwidth}{!}{%
            \begin{tabular}{@{}lcccccccccc@{}}
                \toprule
                \multirow{2}{*}{Experiment} & \multicolumn{5}{c}{CROWN-IBP Rob. Acc. (\%)} & \multicolumn{5}{c}{STBP-Lipschitz Adv. Patch Rob. Acc. (\%)} \\
                \cmidrule(l){2-6} \cmidrule(l){7-11}
                 & \multicolumn{1}{l}{$\epsilon=0.1$} & \multicolumn{1}{l}{$\epsilon=0.01$} & \multicolumn{1}{l}{$\epsilon=0.001$} & \multicolumn{1}{l}{$\epsilon=0.0001$} & \multicolumn{1}{l}{$\epsilon=0.00001$} & \multicolumn{1}{l}{$\epsilon=0.1$} & \multicolumn{1}{l}{$\epsilon=0.01$} & \multicolumn{1}{l}{$\epsilon=0.001$} & \multicolumn{1}{l}{$\epsilon=0.0001$} & \multicolumn{1}{l}{$\epsilon=0.00001$} \\
                 \midrule
                MNIST Toy Model & 0.0 & 17.9 & 69.96 & 82.0 & 82.25 & \textbf{0.4} & \textbf{20.45} & \textbf{79.5} & \textbf{92} & \textbf{93.10} \\
                MNIST Toy Model Large & 0.0 & 15.8 & 87.4 & 88.2 & 88.3 & \textbf{63.7} & \textbf{89.2} & \textbf{91.1} & \textbf{91.2} & \textbf{91.3} \\
                UCF-101 & 0.0 & 0.0 & 2.07 & 49.93 & 63.48 & \textbf{2.6} & \textbf{12.3} & \textbf{19.36} & \textbf{61.9} & \textbf{71.81} \\
                Udacity Steering Angle & 0.0 & 0.0 & 5.61 & \textbf{84.24} & \textbf{84.24} & \textbf{7.24} & \textbf{84.24} & \textbf{84.24} & \textbf{84.24} & \textbf{84.24} \\
                MEDMNIST Synapse3D & 0.0 & 0.0 & 71.75 & \textbf{72.88} & \textbf{72.88} & 0.0 & \textbf{72.88} & \textbf{72.88} & \textbf{72.88} & \textbf{72.88} \\
                MEDMNIST Synapse3D & 0.0 & 0.0 & 8.24 & \textbf{73.01} & \textbf{73.01} & 0.0 & 0.0 & \textbf{33.33} & \textbf{73.01} & \textbf{73.01} \\ \bottomrule
            \end{tabular}%
        }
    \end{center}
    \caption{Certified robustness of spatio-temporal models using CROWN-IBP and STBP-Lipschitz approximations under adversarial patch perturbations.}
    \label{tab:results}
\end{table*}

\looseness=-1
We empirically validate the effectiveness of our approach towards certifying adversarial robustness of neural networks finding marked improvement in certification on each of the tested datsets and models. We assess robustness against perturbation magnitudes $\epsilon \in \{10^{-5}, 10^{-4}, 10^{-3}, 10^{-2}, 10^{-1}\}$ and patch sizes $k \in \{1, \dots, 10\}$, comparing standard IBP, our hybrid STBP approach. We omit CROWN from our experimental comparisons because the extensive memory requirements for its backward propagation consistently result in Out-of-Memory (OOM) errors, even on the smallest MNIST toy model.

\looseness=-1
Figure~\ref{fig:mnist-stbp-eval} illustrates the certified robust accuracy of the MNIST video model under both increasing perturbation magnitude (Figure~2a) and patch size (Figure~2b). STBP consistently demonstrates better performance than IBP across all perturbation levels. Notably, the patch-based variant of STBP achieves nearly perfect robustness up to $\epsilon = 10^{-4}$ and small patch sizes, highlighting the effectiveness of incorporating realistic spatio-temporal constraints.

\looseness=-1
The results are quantitatively summarized in the Table~\ref{tab:results}, which reports clean and certified robust accuracies across four domains: MNIST, UCF-101, Udacity self-driving, and MEDMNIST Synapse3D. STBP achieves substantial improvements in robust accuracy over IBP—most notably on the MNIST model, where STBP with patch constraints improves robustness from 49.32\% to 77.05\% at $\epsilon = 10^{-4}$. For autonomous driving and medical imaging datasets, STBP yields over 85\% robust accuracy under patch perturbations, demonstrating its scalability to real-world spatio-temporal systems.

\looseness=-1
While our approach scales favorably for moderate-sized 3D CNNs, larger architectures such as VLMs remain challenging even under shared perturbation constraints. Full runtime comparisons and analysis of failure cases are presented in the Appendix~\ref{app:resultsandruntimes}. Although the STBP approach with Löwner-John sampling incurs additional memory due to sampling and ellipsoid approximation, this cost is justified by the \CR{significantly improved empirical tightness it provides as a reachable-set oracle}. \CR{All three approaches have runtimes that are comparable to IBP and yield tighter bounds than IBP—soundly so for STBP-IBP and STBP-Lipschitz, and empirically so for STBP-LJ.}

\subsection{\CR{On the Magnitude of Verifiable $\epsilon$}}
\label{subsec:epsilon-magnitude}
\looseness=-1
\CR{The certifiable perturbation magnitudes reported across our experiments (e.g., $\epsilon \in [10^{-5}, 10^{-1}]$ on the MNIST toy video, and as small as $10^{-4}$ on MEDMNIST Synapse3D) are substantially smaller than those typically reported for single-image $\ell_\infty$ verification on flattened MNIST under matched threat models. This is a structural consequence of the spatio-temporal verification setting rather than a weakness of STBP, and we summarise the contributing factors below.}

\looseness=-1
\CR{\emph{(i) Per-element $\epsilon$ defines a strictly larger input set in the temporal regime.} For a video with $T$ frames, an $\ell_\infty$ ball of radius $\epsilon$ encloses all per-pixel-per-frame perturbations of magnitude at most $\epsilon$; the volume of the admissible input set scales as $(2\epsilon)^{C \cdot D \cdot H \cdot W}$ rather than $(2\epsilon)^{C \cdot H \cdot W}$ for a single image. Certifiable $\epsilon$ per element naturally shrinks with the number of frames, all else equal.}

\looseness=-1
\CR{\emph{(ii) Structured, correlated perturbations are a richer threat model.} The shared and patch-based constraints we verify against impose correlation between perturbations across frames, 
yielding a strictly more powerful adversary than the i.i.d.\ per-pixel attacker typically assumed in flattened-MNIST benchmarks. Smaller certifiable $\epsilon$ values are the expected price of certifying against richer adversary.}

\looseness=-1
\CR{\emph{(iii) 3D convolutions and temporal coupling loosen bound propagation.} Compared to MLP-style flattened-MNIST baselines, 3D CNNs have deeper effective receptive fields and intertwined spatio-temporal weights, both of which amplify the over-approximation inherent in any sound relaxation.}

\looseness=-1
\CR{\emph{Interpretation of the results.} The relevant comparison is therefore the \emph{relative} tightness of STBP against IBP / CROWN-IBP under the same spatio-temporal threat model, not the absolute scale of $\epsilon$. Under this lens, STBP yields up to $1.7\times$ improvement in certified robust accuracy over IBP at matched $\epsilon$, demonstrating substantive practical value in a regime where prior methods produce vacuous or trivial bounds.}

\subsection{Ablation: Per-Layer Bound Width}
\label{subsection:perlayerboundwidth}

\begin{figure*}[t]
\centering
\fbox{\includegraphics[alt={Per-layer bound width comparison between vanilla IBP and STBP-IBP, illustrating that the closed-form first-layer analysis tightens bounds at the input stage and that the tightness advantage propagates through all subsequent layers.}, width=0.9\textwidth]{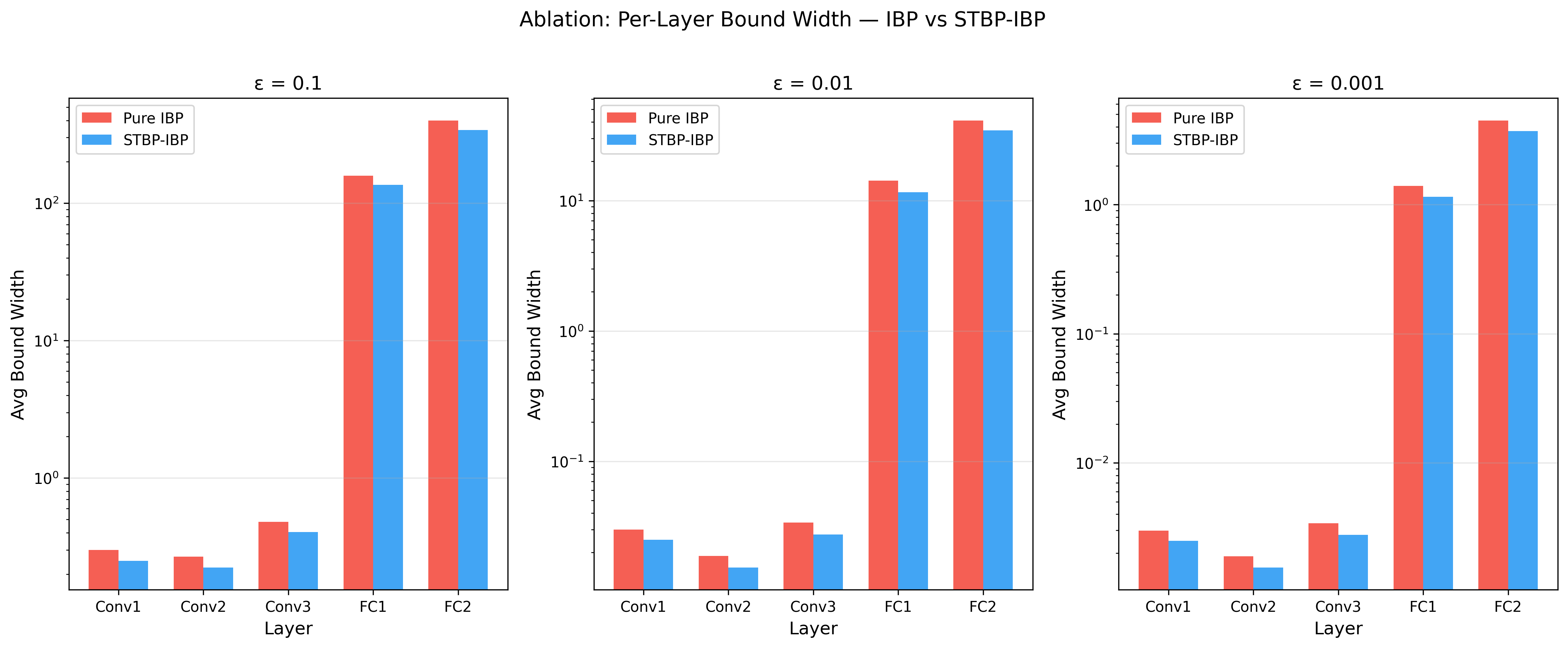}}
\caption{Per-layer average bound width for pure IBP versus STBP-IBP on the MNIST video model (10 frames, $28 \times 28$) at 3 perturbation magnitudes ($\epsilon \in \{0.1, 0.01, 0.001\}$). STBP-IBP consistently produces tighter bounds at every layer, with the tightness advantage established at Conv1 propagating through Conv2, Conv3, FC1, and FC2.}
\label{fig:per-layer-bound-width}
\end{figure*}

\looseness=-1
To understand \emph{where} and \emph{how} the STBP bound tightness arises, we conduct a per-layer ablation comparing the average bound width produced by pure IBP and STBP-IBP through each layer of the MNIST video model. Figure~\ref{fig:per-layer-bound-width} plots the results across three perturbation magnitudes.

\looseness=-1
At the first convolutional layer (Conv1), the closed-form STBP solution exploits the shared spatio-temporal perturbation structure to reduce the effective weight contribution from $W_{k,c,d,h,w}$ to $W_{\mathrm{eff}} = \sum_d W_{k,c,d,h,w}$, yielding bounds that are approximately $1.20\times$ tighter than standard IBP. Crucially, this tightness advantage \emph{propagates} through all subsequent layers. As the tighter Conv1 output intervals pass through ReLU activations and subsequent affine layers via standard IBP, the bound widths at Conv2, Conv3, FC1, and FC2 remain consistently $1.16$--$1.23\times$ smaller than their pure-IBP counterparts, across all tested $\epsilon$ values.

\looseness=-1
Although the closed-form derivation in Section~\ref{sec:method} generalises to any affine layer---not only the first convolution---applying it beyond the first layer yields \emph{no additional tightening}. This is because, after the first ReLU activation, each neuron's output interval is independently bounded and no longer shares the structured perturbation pattern exploited by the closed form. Consequently, the subsequent-layer closed-form bounds reduce to standard IBP bounds. The first convolutional layer is therefore the \emph{optimal and sufficient} point at which to apply the STBP closed-form analysis: it captures the full benefit of the shared perturbation structure while incurring only a single forward-pass overhead that is negligible compared to the overall verification runtime.

%% file: sections/6_conclusion.tex
\section{Conclusions}
\label{sec:conclusion}
Robustness verification remains a significant challenge for spatio-temporal systems due to the high dimensionality of both their input and model architectures. This work addresses this limitation by introducing a scalable verification framework that exploits spatio-temporal structure through an exact closed-form solution for the first convolutional layer, combined with scalable bound propagation for subsequent layers. Our experimental results show notable improvements in certified robustness across diverse spatio-temporal tasks, with robust accuracy gains of up to $1.7\times$ compared to IBP under identical perturbation budgets. To facilitate systematic evaluation, we introduce \texttt{ST-Bench}, a benchmark suite specifically designed for verifying robustness in spatio-temporal systems. Using ST-Bench, we demonstrate how the proposed verification method can be effectively scaled to real-world scenarios.

\textit{\CR{Limitations.}}
\CR{Although STBP offers improvements in spatio-temporal verification, several open problems remain. First, while STBP-IBP and STBP-Lipschitz provide strictly sound certificates, the Löwner--John pipeline (STBP-LJ) relies on finite uniform boundary sampling and is reported as an empirical reachable-set estimator rather than a certified verifier (cf.\ Remark~\ref{rem:lj-empirical} in Section~\ref{sec:method}); a deterministic sound variant—e.g., via Lipschitz inflation of the sampled extrema, deterministic spherical codes, or vertex enumeration of the first-layer polytope—is a natural direction for future work. Second, although STBP scales favourably to moderate-sized 3D~CNNs, formal verification of considerably larger architectures such as VLMs and large vision–video transformers remains an open problem for the field as a whole: closed-form bounds for self-attention and sound relaxations for softmax / layer normalisation are still missing, and our framework inherits this limitation. Third, the verifiable perturbation magnitudes reported in our experiments are necessarily smaller than those typically reported for single-image $\ell_\infty$ verification under matched threat models, because shared and correlated cross-frame perturbations define a strictly richer adversarial set; STBP's contribution should therefore be read as a \emph{relative} improvement (up to $1.7\times$ over IBP) in this harder regime rather than an absolute statement about achievable $\epsilon$.}

Future research directions include the development of automated mechanisms for generating selective constraints and heuristic strategies to enable provable robustness in large-scale, real-world spatio-temporal AI systems. Another promising avenue lies in integrating Spatio-Temporal Bound Propagation (STBP) with certified training pipelines to further enhance robustness guarantees.

%% file: sections/7_acknowledgements.tex
\begin{credits}
\subsubsection{\ackname}
\label{sec:acknowledgements}
Sherwin Varghese is supported by the UKRI Horizon EU project
"EVENFLOW" (Project ID: 101070430). Alessio Lomuscio is supported by a Royal Academy of Engineering Chair in Emerging Technologies.

\end{credits}

%% file: sections/9_appendix.tex
\clearpage
\appendix
\section{Appendix}

\subsection{Preliminaries}
\label{app:preliminaries}

\subsubsection{Neural Network}
We consider a Feed-Forward Neural Networks (FFNN) that perform classification of temporal sequences. In particular, our focus is on video classification systems that takes in a set of input video frames as input. Formally, we define the network $f_{\theta}$ with parameters $\theta$ as $f_{\theta}: \mathbb{R}^{N \times C \times W \times H} \rightarrow \mathbb{R}^{M}$ where $N$ denotes the number of frames per video, $H$ denotes the height, $W$ denotes the width of each input frame, $C$ denotes the channels in of each image frame (for RGB, $C$=3) and $M$ denotes the number of output classes. The network assigns a class $i \in \{1, 2, \dots, M\}$ if $f_{\theta}(x)_i > f_{\theta}(x)_j\ \forall i \neq j$.

\subsubsection{Neural Network Classification}
Given the FFNN $f_{\theta}$ and the input video (represented by several image frames)  $X$, the clean prediction of the network is: $\hat{y} = f_{\theta}(x)$.
The FFNN is composed of intermediate layers. We denote $x_0$ as the input layer. The final output of the network is denoted by $y_\textsubscript{target}$. The neural network is composed of a series of transformations $h_l$ within each of its $L$ layers. Thus, the output of each layer $z_l$ can be expressed by the following equation:
\begin{equation}
    z_l = h_l(x_\textsubscript{$l-1$}) \quad l = 1, 2, ... L
\end{equation}
The transformations $h_l$ can be an affine transformation or a monotonic function such as ReLU or sigmoid. We represent the affine transformation as 
\begin{equation}
    h_l(x_\textsubscript{$l-1$}) = W_l\ \cdot  x_{l-1} + b_l
\end{equation}
where $W_l \in \mathbb{R}^{N \times C \times W \times H}$ is the weight of the matrix and $b_l$ is the bias vector for the layer $l$.

\subsubsection{Robustness Verification}
Verification techniques validate whether neural networks satisfy certain specifications. In this setting, we focus on ensuring that the adversarial robustness specification holds.

In order to verify the robustness of $f_{\theta}$ against adversarial inputs, we apply perturbations on the input video $X \in \mathbb{R}^{N \times C \times W \times H}$, where $N$ denotes the total number of frames and $C$ denotes the number of color channels. We observe the outputs of $f_{\theta}$ under the adversarial perturbation defined by the budget $\epsilon$. Thus, verification proceeds by searching for a counter example that violates the specification or until all the samples satisfy the specification constraint.
\\
As defined by \cite{gowal2018ibp}, the network is called robust at a point $x_0$ if no adversarial example can cause the prediction of the network to change from its target label for all $z_0 \in \chi(x_0)$, where $\chi(x_0)$ is a set of inputs around $x_0$. This is expressed formally for each class $y$ as:
\begin{equation}
    (e_{\hat{y}'} - e_{\hat{y}})^T z_l \leq 0 \quad \forall z_0 \in \chi(x_0) = \{ x | \norm{x - x_0}_\infty < \epsilon \}
\end{equation}

\subsubsection{Adversarial Perturbation}

We apply a perturbation $\delta$ bounded by $\epsilon$ to the input video frames $X$ within a norm $\ell_\infty $-norm or $ \ell_2 $-norm $\epsilon \geq 0$, \ $\epsilon$ being the maximum allowed perturbations of each frame in the video.
For the adversarial input $X^\text{adv} = X + \delta$, the output of the network~\cite{goodfellow2014explaining} is defined by $\hat{y}' = f_{\theta}(X^\text{adv}).$
For a robust network, the input with perturbations $X +\delta$ maintains the same prediction $\hat{y}' = \hat{y}$

\subsubsection{Projected Gradient Descent (PGD) Attack}
To falsify the robustness of the neural network, we use Projected Gradient Descent as the adversarial attack strategy. The PGD algorithm iteratively applies gradient-based perturbations to maximize the network's loss and projects the perturbed input back into the feasible region defined by the perturbation bound.

Mathematically, PGD Attack is defined as -
\begin{equation*}
\delta^{(t+1)} = \Pi_\epsilon(\delta^{(t)} + \alpha \cdot \text{sign}(\nabla_X \mathcal{L}(f_{\theta}(X + \delta), y)))
\end{equation*}
    
The projection operator $\Pi_\epsilon$ ensures that the perturbation remains within the allowable budget:
\begin{equation}
    \begin{aligned}[t]
    \Pi_\epsilon(\delta) =& \arg \min_{X^\text{adv}} \norm{X - X^\text{adv}} 
    \\ & \  \text{subject to} \quad \norm{X - X^\text{adv}} \leq \epsilon.
    \end{aligned}
\end{equation}
For the $\ell_\infty$-norm, this corresponds to clipping the perturbation to lie within the range $[-\epsilon, \epsilon]$ for each input.

\subsubsection{Adversarial Training}
Adversarial training is a widely adopted defence technique designed to improve the robustness of deep neural networks against adversarial perturbations. The core idea is to augment the training process by explicitly incorporating adversarial examples into the optimization objective. This ensures that the model not only performs well on clean data but also makes it robust against perturbations .

The adversarial training objective is defined using a weighted combination of clean and adversarial losses. Let \( X \) denote the original input, \( X^{\text{adv}} \) the adversarially perturbed input, \( y \) the true label, and \( f^\theta \) a neural network with parameters \( \theta \). The loss function for adversarial training at epoch \( t \) is given by:

\begin{equation}
    \begin{aligned}[t]
    \mathcal{L}(f^\theta(X, y)) &= \mathcal{L_\text{0}}(f^\theta(X), y) + \lambda_t * \mathcal{L_\text{1}}(f^\theta(X^{\text{adv}}), y)
    \\& \text{where} \lambda_t \in [0,1]\ \text{and varies with epoch}\ t.
    \end{aligned}
\end{equation}

\( \mathcal{L}_\text{0} \) represents the standard cross-entropy loss computed on the clean input \( X \), and \( \mathcal{L}_\text{1} \) represents the adversarial loss computed on the perturbed input \( X^{\text{adv}} \). The weighting parameter \( \lambda_t \) is a function of the training epoch \( t \) and is scheduled to gradually increase during training to allow for smoother convergence (e.g  training begins with \( \lambda_t = 0 \) to focus on clean accuracy and then shift toward robustness by increasing \( \lambda_t \) to 1).

The adversarial example \( X^{\text{adv}} \) is generated using PGD, where the perturbation is constrained within an \( \ell_p \)-norm ball around the original input \( X \). Adjusting \( \lambda_t \) balances the trade-off between standard generalization and robustness.

\subsubsection{Interval Bound Propagation}
This method propagates the input bounds through every layer of the network, emulating the input with adversarial perturbation to compute certified bounds for the network. For an input $x \in \mathbb{R}^d$ applied to the neural network $f_{\theta}$ is perturbed by $\epsilon$ within the $\ell_p$ norm. Thus, the perturbation produces an interval $[x^l,\ x^u]\ =\ [x\ +\ \epsilon,\ x\ -\ \epsilon ]$. 

These intervals are further propagated across the network layers. As proposed in~\cite{gowal2018ibp}, we bound the activation of the intermediate layers within a box along the axis - $\ubar{z}_l(\epsilon) \leq z_l \leq \bar{z}_l(\epsilon)$.Thus, the bounds of the intermediate layers with $\ell_p$ norm adversarial perturbations  can be expressed as the following:
\begin{equation}
    z_{l,i}(\epsilon) = \min_{\substack{\ubar{z}_{l-1}(\epsilon) \leq z_{l-1} \leq \bar{z}_{l-1}(\epsilon)}} e_i^T h_k(z_{l-1})
\end{equation}
where $z_{l,i}(\epsilon)$ is the perturbed activation of the $i$-th coordinate of $z_l$ under the adversarial perturbation of size $\epsilon$, $\ubar{z}_{l-1}(\epsilon)$ and $\bar{z}_{l-1}(\epsilon)$  are the perturbed activation of the $l-1$ layer's lower and upper bound respectively, $h_l(\cdot)$ is the transformation of the layer $l$, $e_i^T$ is the transpose of the basis vector $e_i$ that selects the $i$-th component of the vector.
\\
For layers with linear transformation (affine layer / convolution layer) $l$, the pre-activation with $\ell_p$ norm perturbation is given by $h_k(z_{l-1}) = W_{l-1} \cdot z_{l-1} + b_l$. This can be computed by solving the optimization problem, efficiently represented with 2 matrix multiplications as in~\cite{gowal2018ibp}:
\begin{equation}
        \begin{aligned}[c]
            \mu_{l-1} &= \frac{\bar{z}_{l-1} + \ubar{z}_{l-1}}{2} \\ 
            r_{l-1} &= \frac{\bar{z}_{l-1} - \ubar{z}_{l-1}}{2} \\ 
            \mu_{l} &=  W_l \cdot \mu_{l-1} + b_l \\ 
            r_l &= |W| \cdot r_{l-1} \\ 
            \ubar{z}_l &= \mu_{l} + r_l \\ 
            \bar{z}_l &= \mu{l} - r_l 
        \end{aligned}
\end{equation}
where $\mu_{l-1}$ is the center of the interval, $r_{l-1}$ is the radius of the interval $l-1$, \( W_l \), \( b_l \) are the weight matrix and bias vector respectively. \\
For a monotonic layer after the application of the activation function $\sigma$ is defined by:
\begin{equation}
    \begin{aligned}[c]
        \ubar{z}_{l} &= h_l(\ubar{z}_{l-1}) \\
        \bar{z}_{l} &= h_l(\bar{z}_{l-1})
    \end{aligned}
\end{equation}
The key strength of IBP lies in its computational efficiency. However, it is worth to note that this comes with the drawback of loose approximations, especially for networks with large inputs. The propagated bounds tend to be loose, leading to broad estimates of robustness bounds.

\subsubsection{Spatio-Temporal Systems}
Spatio-temporal systems refer to neural network architectures that process inputs exhibiting inherent correlations along both spatial and temporal dimensions. Formally, such systems implement mappings $f : \mathbb{R}^{C \times D \times H \times W} \rightarrow \mathcal{Y}$, where $C$ denotes the number of input channels, $D$ the temporal depth (e.g., number of frames or slices), and $H \times W$ the spatial resolution. Prominent examples include video classification networks processing sequential frames captured over time, and volumetric medical imaging models analyzing sequential slices from magnetic resonance imaging (MRI) or computed tomography (CT) scans. A distinguishing characteristic of spatio-temporal data is that realistic perturbations - whether arising from sensor noise, environmental artifacts, or adversarial manipulation - are not independent across the temporal axis. For instance, an adversarial patch affixed to a physical object (e.g., a manipulated road sign or bumper sticker) induces perturbations that are spatially localized and temporally correlated across consecutive frames. This structure reduces the effective adversarial degrees of freedom compared to unconstrained per-pixel perturbations, a property that can be exploited to obtain tighter verification bounds than those achievable with standard interval bound propagation methods that treat all input dimensions as independent.

\subsection{Challenge with Standard IBP in Video Settings}
Standard IBP becomes inefficient when applied to inputs with high correlation, such as videos. In video classification, the input is a sequence of  $N$ frames. Due to high correlation along the temporal axis, the independent propagation of bounds for each frame leads to an \textit{exponential} or \textit{polynomial} increase in the size of the propagated bounds.

For example, let $N$ be the number of frames in a video, and suppose the perturbation is applied independently to each frame. The propagated bounds across all frames grow as $O(N^d)$, where $d$ is the input dimension. This results in bounds that are overly conservative and difficult to scale for large networks. To address this, we can introduce shared constraints across the temporal axis, allowing for more efficient certification in video models.

\subsection{Proof of the Closed-Form first-layer bounds}
\label{app:proof-closedform}
\setcounter{theorem}{0}
\begin{theorem}[Closed-Form First-Layer Bounds]
	For each neuron \(j\) in the first layer, the exact bounds are given by
	\[
		\ell_j^{(1)} = z^{(1)}_{\mathrm{clean},j} - \sum_{r=1}^{q} \epsilon_r |\alpha_{j,r}|, \quad
		u_j^{(1)} = z^{(1)}_{\mathrm{clean},j} + \sum_{r=1}^{q} \epsilon_r |\alpha_{j,r}|.
	\]
\end{theorem}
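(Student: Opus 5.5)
The plan is to reduce the first-layer optimisation to a separable linear program over a box and solve it coordinate by coordinate. First I would make the reduction in the subsection on shared perturbations precise. The constraints in $\mathcal{S}$ partition the perturbed indices into equivalence classes $G_1,\dots,G_q$, with one free variable $\tilde{\delta}_r$ per class. The indices in $\mathcal{F}$ are dropped, or equivalently their class has $\epsilon_r = 0$. For each class the bound $\epsilon_r$ is taken to be the minimum of the $\epsilon_i$ over $i\in G_r$. Since every index in a class carries the same value, the box constraints of the class collapse to the single constraint $|\tilde{\delta}_r|\le\epsilon_r$.

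Next I would show that the map $\tilde{\boldsymbol{\delta}}\mapsto\boldsymbol{\delta}$, given by $\delta_i=\tilde{\delta}_r$ for $i\in G_r$ and $\delta_i=0$ for $i\in\mathcal{F}$, is a bijection between the box $\prod_r[-\epsilon_r,\epsilon_r]$ and the set of admissible perturbations. Under this map $\Delta z^{(1)}_j=\sum_i w_{j,i}\delta_i=\sum_r\big(\sum_{i\in G_r}w_{j,i}\big)\tilde{\delta}_r$, which gives the aggregated coefficient $\alpha_{j,r}=\sum_{i\in G_r}w_{j,i}$. It follows that $\ell^{(1)}_j$ and $u^{(1)}_j$ equal $z^{(1)}_{\mathrm{clean},j}$ plus the minimum and maximum of $\sum_r\alpha_{j,r}\tilde{\delta}_r$ over the box.

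The core step uses separability. Because the objective is a sum of one-variable terms and the feasible set is a product of intervals, the maximum is the sum of the per-coordinate maxima. For each $r$ we have $\max_{|\tilde{\delta}_r|\le\epsilon_r}\alpha_{j,r}\tilde{\delta}_r=\epsilon_r|\alpha_{j,r}|$, attained at $\tilde{\delta}_r=\epsilon_r\,\mathrm{sign}(\alpha_{j,r})$ (either endpoint works if $\alpha_{j,r}=0$). Hence $\sum_r\alpha_{j,r}\tilde{\delta}_r\le\sum_r\epsilon_r|\alpha_{j,r}|$ for every feasible point, by the termwise inequality $\alpha\tilde{\delta}\le|\alpha||\tilde{\delta}|\le\epsilon|\alpha|$. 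The value is attained by the sign choice above. The minimum follows by symmetry, replacing $\tilde{\delta}$ with $-\tilde{\delta}$. Attainment is what makes the bounds exact and not merely sound.

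I expect the main obstacle to be bookkeeping rather than the optimisation. The point needing care is that the shared constraints must be consistent with the bound and fixed constraints, for instance when an index in $\mathcal{F}$ is tied by $\mathcal{S}$ to a bounded index. I would handle this by defining the class radius as the tightest constraint in the class, with radius $0$ if any member lies in $\mathcal{F}$. I would also state explicitly that indices in neither $\mathcal{B}$ nor $\mathcal{F}$ are assumed not to occur, or are treated as fixed. With that convention the bijection step goes through and the rest is immediate.
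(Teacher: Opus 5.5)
Your proof is correct. It rests on the same observation as the paper's proof: the optimum is attained at the sign-aligned vertex $\tilde{\delta}_r=\epsilon_r\,\mathrm{sign}(\alpha_{j,r})$. The route differs in two respects.

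The paper takes the reduced box $\prod_r[-\epsilon_r,\epsilon_r]$ as given from the preceding subsection. It then invokes linear programming theory to restrict attention to the vertices and picks the best one.

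You instead (i) construct the reduction explicitly, using the equivalence classes generated by $\mathcal{S}$, the class radius $\epsilon_r=\min_{i\in G_r}\epsilon_i$ (zero when the class meets $\mathcal{F}$), and a bijection between the box and the admissible perturbations. You then (ii) replace the vertex theorem by separability and the termwise bound $\alpha_{j,r}\tilde{\delta}_r\le|\alpha_{j,r}|\,\epsilon_r$, which controls every feasible point directly.

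The paper's version is shorter. Yours is self-contained and, more to the point, actually justifies the word ``exact''. The identity $\ell^{(1)}_j=\min_{\mathbf{x}'\in T(\mathbf{x})}z^{(1)}_j(\mathbf{x}')$ requires the reduced box to be precisely the image of $T(\mathbf{x})$, which the paper leaves implicit. If that fails, the formula gives only a sound over-approximation. This happens, for instance, if $\epsilon_r$ exceeds the smallest radius in its class, or if a fixed index tied by $\mathcal{S}$ to a bounded one is ignored.
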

\begin{proof}
	The objective is linear and the feasible set is a hyperrectangle. By linear programming theory, extrema are attained at vertices where each \(\tilde{\delta}_r \in \{\pm \epsilon_r\}\). Choosing signs aligned with \(\alpha_{j,r}\) maximizes the objective, yielding the stated expressions. 
\end{proof}

\subsection{Algorithm for Lipschitz Propagation for subsequent layers of the network}
Algorithm~\ref{alg:lipschitz} defines Lipschitz bound propagation for the remaining layers of the network, starting from the second layer.

\begin{algorithm}[H]
    \caption{Lipschitz Bound Propagation}
    \label{alg:lipschitz}
    \begin{algorithmic}[1]
        \REQUIRE Bounds $[\ell^{(1)}, u^{(1)}]$, network layers
        \STATE Initialize $L_{\mathrm{total}} \gets 1$
        \FOR{$i = 2$ to $k$}
        \STATE Compute spectral norm $L_i$
        \STATE Update $L_{\mathrm{total}} \gets L_{\mathrm{total}} \cdot L_i$
        \STATE Propagate centre forward
        \ENDFOR
        \RETURN Output bounds using $L_{\mathrm{total}}$
    \end{algorithmic}
\end{algorithm}

\subsection{Proof of the soundness of Lipschitz Propagation}
\label{app:proof-lipschitz}

\begin{theorem}[Soundness of Lipschitz Bound Propagation after Closed-Form Analysis]
Let $f = f^{(k)} \circ \cdots \circ f^{(1)}$ be a neural network, where the first
layer $f^{(1)}$ admits an exact closed-form bound computation under an
$\ell_\infty$-bounded perturbation set
\[
T(\mathbf{x}) = \{ \mathbf{x}' : \|\mathbf{x}' - \mathbf{x}\|_\infty \le \epsilon \}.
\]
Suppose the remaining subnetwork
\(
\tilde{f} := f^{(k)} \circ \cdots \circ f^{(2)}
\)
is $L_{\mathrm{total}}$-Lipschitz with respect to the $\ell_2$-norm.

Let $[\ell^{(1)}, u^{(1)}]$ denote the exact first-layer bounds obtained via the
closed-form analysis, and let
\[
\mathbf{c}^{(1)} = \tfrac{1}{2}(\ell^{(1)} + u^{(1)}), \qquad
\mathbf{r}^{(1)} = \tfrac{1}{2}(u^{(1)} - \ell^{(1)}).
\]
Then, for all $\mathbf{x}' \in T(\mathbf{x})$, the network output satisfies
\[
f(\mathbf{x}') \in \left[ f(\mathbf{c}^{(1)}) \pm L_{\mathrm{total}}\sqrt{d}\,\|\mathbf{r}^{(1)}\|_\infty \mathbf{1} \right]
\]
where $d$ denotes the dimensionality of the input to $\tilde{f}$ and
$\mathbf{1}$ is the all-ones vector.
\end{theorem}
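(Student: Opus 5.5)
The plan is a three-link chain: input box $\to$ first-layer box $\to$ $\ell_2$ ball $\to$ output box. The notation $f(\mathbf{c}^{(1)})$ in the statement is read as $\tilde{f}(\mathbf{c}^{(1)})$: the centre $\mathbf{c}^{(1)}$ lives in the first-layer activation space, so only the subnetwork $\tilde f$ can be evaluated at it. With that reading, fix an arbitrary $\mathbf{x}' \in T(\mathbf{x})$ and set $\mathbf{z} := f^{(1)}(\mathbf{x}')$, so that $f(\mathbf{x}') = \tilde f(\mathbf{z})$.

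\emph{Step 1 (first-layer enclosure).} The plain $\ell_\infty$ ball $T(\mathbf{x})$ is the special case of the structured specification with $\mathcal{S} = \mathcal{F} = \emptyset$ and $\epsilon_r = \epsilon$. The Closed-Form First-Layer Bounds theorem (Theorem~1) therefore gives $\ell^{(1)} \le \mathbf{z} \le u^{(1)}$ coordinatewise. By the definitions of $\mathbf{c}^{(1)}$ and $\mathbf{r}^{(1)}$, this is equivalent to $|z_j - c^{(1)}_j| \le r^{(1)}_j$ for every $j$. Hence
\[
\|\mathbf{z} - \mathbf{c}^{(1)}\|_\infty \le \|\mathbf{r}^{(1)}\|_\infty .
\]
If a first-layer ReLU is counted inside $f^{(1)}$, the same enclosure survives: ReLU is monotone and $1$-Lipschitz, so one can instead use the post-activation box, or absorb the ReLU into $\tilde f$.

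\emph{Step 2 (norm conversion and Lipschitz step).} Since $\mathbf{z}, \mathbf{c}^{(1)} \in \mathbb{R}^d$, the standard inequality $\|\mathbf{v}\|_2 \le \sqrt{d}\,\|\mathbf{v}\|_\infty$ gives
\[
\|\mathbf{z} - \mathbf{c}^{(1)}\|_2 \le \sqrt{d}\,\|\mathbf{r}^{(1)}\|_\infty .
\]
The $L_{\mathrm{total}}$-Lipschitz hypothesis on $\tilde f$ in $\ell_2$ then yields
\[
\|\tilde f(\mathbf{z}) - \tilde f(\mathbf{c}^{(1)})\|_2 \le L_{\mathrm{total}}\sqrt{d}\,\|\mathbf{r}^{(1)}\|_\infty .
\]

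\emph{Step 3 (back to a box).} For each output coordinate $i$, we have $|\tilde f(\mathbf{z})_i - \tilde f(\mathbf{c}^{(1)})_i| \le \|\tilde f(\mathbf{z}) - \tilde f(\mathbf{c}^{(1)})\|_2$, since $\|\cdot\|_\infty \le \|\cdot\|_2$. Combining this with Step 2 places $f(\mathbf{x}') = \tilde f(\mathbf{z})$ in the stated interval $\tilde f(\mathbf{c}^{(1)}) \pm L_{\mathrm{total}}\sqrt{d}\,\|\mathbf{r}^{(1)}\|_\infty \mathbf{1}$. Because $\mathbf{x}'$ was arbitrary, the claim holds on all of $T(\mathbf{x})$.

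The theorem takes $L_{\mathrm{total}}$ as a hypothesis, so strictly nothing more is required. For the result to apply to the algorithm, however, one should also check that the computed $L_{\mathrm{total}} = \prod_{i\ge 2}\sigma(\mathbf{W}^{(i)})$ is a valid Lipschitz constant. This follows from three facts: the spectral norm is the exact $\ell_2$ Lipschitz constant of an affine map; ReLU and the other monotone activations used are $1$-Lipschitz; and Lipschitz constants multiply under composition. For convolutions, $\sigma$ must be the operator norm of the full unrolled convolution matrix, not of the kernel tensor. The main obstacle is making this correspondence precise, together with fixing the $\tilde f$ versus $f$ notation; the norm-conversion chain itself is routine.
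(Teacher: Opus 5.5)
Your proof is correct and follows the paper's argument essentially step for step: the first-layer box enclosure written in centre--radius form, the conversion $\|\boldsymbol{\eta}\|_2 \le \sqrt{d}\,\|\boldsymbol{\eta}\|_\infty$, the $\ell_2$-Lipschitz step for $\tilde f$, and $\|\cdot\|_\infty \le \|\cdot\|_2$ to return to a box. Reading $f(\mathbf{c}^{(1)})$ as $\tilde f(\mathbf{c}^{(1)})$ matches what the paper's proof actually bounds before it substitutes $\tilde f \circ f^{(1)} = f$, and your remarks on instantiating Theorem~1 with $\mathcal{S}=\mathcal{F}=\emptyset$ and on validating $L_{\mathrm{total}}=\prod_i \sigma(\mathbf{W}^{(i)})$ are sensible additions that the paper leaves implicit.
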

\begin{proof}
By construction of the closed-form solution for the first layer, for every
$\mathbf{x}' \in T(\mathbf{x})$ the corresponding first-layer activation
$\mathbf{z}^{(1)}(\mathbf{x}')$ satisfies
\[
\mathbf{z}^{(1)}(\mathbf{x}') \in [\ell^{(1)}, u^{(1)}].
\]
Equivalently, this can be written in center--radius form as
\[
\mathbf{z}^{(1)}(\mathbf{x}') = \mathbf{c}^{(1)} + \boldsymbol{\eta},
\quad
\text{with }
\|\boldsymbol{\eta}\|_\infty \le \|\mathbf{r}^{(1)}\|_\infty.
\]

To apply Lipschitz-based analysis, we convert this enclosure to the $\ell_2$-norm.
By standard norm equivalence in finite-dimensional spaces, for any vector
$\boldsymbol{\eta} \in \mathbb{R}^d$,
\[
\|\boldsymbol{\eta}\|_2 \le \sqrt{d} \, \|\boldsymbol{\eta}\|_\infty.
\]
Therefore,
\[
\|\mathbf{z}^{(1)}(\mathbf{x}') - \mathbf{c}^{(1)}\|_2
\le
\sqrt{d} \, \|\mathbf{r}^{(1)}\|_\infty.
\]

Since $\tilde{f}$ is $L_{\mathrm{total}}$-Lipschitz with respect to the $\ell_2$-norm,
it follows that
\[
\|\tilde{f}(\mathbf{z}^{(1)}(\mathbf{x}')) - \tilde{f}(\mathbf{c}^{(1)})\|_2
\le
L_{\mathrm{total}} \sqrt{d} \, \|\mathbf{r}^{(1)}\|_\infty.
\]

Finally, using the fact that the $\ell_\infty$-norm is upper bounded by the
$\ell_2$-norm, we obtain
\[
\|\tilde{f}(\mathbf{z}^{(1)}(\mathbf{x}')) - \tilde{f}(\mathbf{c}^{(1)})\|_\infty
\le
L_{\mathrm{total}} \sqrt{d} \, \|\mathbf{r}^{(1)}\|_\infty.
\]

Substituting $\tilde{f} \circ f^{(1)} = f$ yields the stated output bounds. Thus,
the Lipschitz bound propagation applied after the closed-form first-layer analysis,
together with norm conversion, produces a sound enclosure of the network output
under the original $\ell_\infty$-bounded perturbation specification.
\end{proof}

\subsection{Proof of the soundness of Löwner-John Ellipsoid Sampling}
\label{app:proof-ljsampling}
\begin{theorem}[Soundness of Löwner--John Ellipsoid Propagation]
Let $f = f^{(k)} \circ \cdots \circ f^{(2)}$ denote the subnetwork following the
first layer, and let $[\ell^{(1)}, u^{(1)}]$ be the exact first-layer bounds
obtained via the closed-form analysis under an $\ell_\infty$-bounded perturbation
set $T(\mathbf{x})$. Let $\mathcal{Z}^{(1)} \subset \mathbb{R}^d$ denote the exact
reachable set of first-layer activations, and let $\mathcal{E}$ be the
Löwner--John ellipsoid of $\mathcal{Z}^{(1)}$, i.e., the minimum-volume ellipsoid
such that
\[
\mathcal{Z}^{(1)} \subseteq \mathcal{E}.
\]

Assume that the bounds on the network output are computed by evaluating $f$ on a
set of points whose convex hull contains $f(\partial \mathcal{E})$, where
$\partial \mathcal{E}$ denotes the boundary of $\mathcal{E}$. Then the resulting
output bounds constitute a sound enclosure of
\[
\{ f(\mathbf{z}) : \mathbf{z} \in \mathcal{Z}^{(1)} \}.
\]
\end{theorem}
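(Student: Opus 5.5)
The plan is to split the claim into a containment step and an extremal step. First, by the defining property of the Löwner--John ellipsoid, $\mathcal{Z}^{(1)} \subseteq \mathcal{E}$. Hence $\{f(\mathbf{z}) : \mathbf{z}\in\mathcal{Z}^{(1)}\} \subseteq f(\mathcal{E})$, and it suffices to show that the computed output box encloses $f(\mathcal{E})$. Second, let $P=\{\mathbf{p}_1,\dots,\mathbf{p}_N\}$ be the evaluated output points, and let $[\underline{\mathbf{y}},\overline{\mathbf{y}}]$ be their coordinate-wise min/max box. Each coordinate functional $\mathbf{e}_i^\top(\cdot)$ is linear, so its extrema over $\mathrm{conv}(P)$ are attained at points of $P$. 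Combined with the hypothesis $f(\partial\mathcal{E})\subseteq \mathrm{conv}(P)$, this gives
\[
f(\partial\mathcal{E}) \subseteq \mathrm{conv}(P) \subseteq [\underline{\mathbf{y}},\overline{\mathbf{y}}].
\]
This part is routine.

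The remaining, substantive step is to pass from $\partial\mathcal{E}$ to all of $\mathcal{E}$. I want to show that for each output coordinate $i$,
\[
\max_{\mathbf{z}\in\mathcal{E}} f_i(\mathbf{z}) = \max_{\mathbf{z}\in\partial\mathcal{E}} f_i(\mathbf{z}),
\]
and similarly for the minimum. The boundary bound then transfers to the whole ellipsoid, and so to $\mathcal{Z}^{(1)}$.

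\textbf{First attempt.} I would argue by a maximum-principle-type reasoning. Take any interior point $\mathbf{z}_0\in\mathrm{int}\,\mathcal{E}$ and any direction $\mathbf{v}$. The line $\mathbf{z}_0+t\mathbf{v}$ meets $\partial\mathcal{E}$ at two parameters $t_-<0<t_+$. I would try to bound $f_i(\mathbf{z}_0)$ by $\max\{f_i(\mathbf{z}_0+t_-\mathbf{v}),\,f_i(\mathbf{z}_0+t_+\mathbf{v})\}$. The tool is the piecewise-affine structure of $f$, which is a composition of affine maps and ReLUs. Restricted to the chord, $f_i$ is a continuous piecewise-linear function of $t$, so its maximum on $[t_-,t_+]$ occurs at an endpoint or at a breakpoint.

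\textbf{Main obstacle.} This is exactly where I expect the argument to fail in general. A breakpoint can be a strict local maximum in the interior: for example $-|t|$, which arises from $-\mathrm{ReLU}(t)-\mathrm{ReLU}(-t)$. In that case the boundary values do not dominate.

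\textbf{Fallback.} I would look for additional structure that excludes interior maxima, such as monotonicity of $f_i$ along some direction field transversal to $\partial\mathcal{E}$. Alternatively, I would read the hypothesis on $P$ as already encoding that the evaluated set witnesses the extremal values of $f$ over $\mathcal{E}$, so that the extremal step is discharged by assumption rather than derived. If neither route closes the gap, the proof would establish soundness over $f(\partial\mathcal{E})$ rigorously. It would then make explicit, as a stated additional condition, that the coordinate extrema of $f$ over $\mathcal{E}$ are attained on $\partial\mathcal{E}$. This condition is consistent with Remark~\ref{rem:lj-empirical}'s treatment of STBP-LJ as an empirical, rather than certified, estimator.
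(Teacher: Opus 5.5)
Your decomposition is the same as the paper's: containment $\mathcal{Z}^{(1)}\subseteq\mathcal{E}$, transfer of the coordinate extrema from $\mathcal{E}$ to $\partial\mathcal{E}$, and the fact that the min/max box of the evaluated outputs contains their convex hull. The two differ only at the middle step. There your diagnosis is correct and the paper's argument is not.

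The paper discharges that step by asserting that, because $f$ is continuous and $\mathcal{E}$ is compact, the extrema of each $g_j$ over $\mathcal{E}$ are attained on $\partial\mathcal{E}$. Compactness only guarantees that the extrema exist, not where they lie, and your $-|t|$ example refutes exactly this inference. In fact it refutes the theorem as stated, not just your chord argument. Take a scalar input with $\mathbf{x}=0$, $\epsilon=1$ and the identity as first layer. Then $\mathcal{Z}^{(1)}=\mathcal{E}=[-1,1]$ and $\partial\mathcal{E}=\{\pm1\}$. Let $f(z)=-\mathrm{ReLU}(z)-\mathrm{ReLU}(-z)$. Evaluating at $\pm1$ satisfies the hull hypothesis and returns the box $[-1,-1]$, yet $f(0)=0$. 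The same failure occurs in every dimension: take $\mathcal{Z}^{(1)}=[-1,1]^d$, so $\mathcal{E}$ is the ball of radius $\sqrt{d}$, and take $f(\mathbf{z})=-\|\mathbf{z}\|_1$. This $f$ is at most $-\sqrt{d}$ on $\partial\mathcal{E}$ but equals $0$ at $\mathbf{0}\in\mathcal{Z}^{(1)}$.

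This settles your fallbacks.
\begin{itemize}
\item No monotonicity or transversality structure can be extracted from the ReLU architecture in general, because the counterexample is itself a two-layer ReLU network. Your chord argument does go through whenever each $f_i$ is quasiconvex on $\mathcal{E}$ (for the upper bound) and quasiconcave (for the lower bound), but general networks are neither.
\item Your second option is not a reading of the hypothesis but a different hypothesis, namely $f(\mathcal{E})\subseteq\mathrm{conv}(P)$, or just $f(\mathcal{Z}^{(1)})\subseteq\mathrm{conv}(P)$. Under it your first two steps are already a complete proof and $\partial\mathcal{E}$ plays no role.
\item Your third option is the right outcome. The statement needs the boundary-extremum condition (or the strengthened hull condition) as an explicit hypothesis, and with it your argument is complete.
\end{itemize}

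Your argument also rightly uses only $\mathcal{Z}^{(1)}\subseteq\mathcal{E}$, never minimality. That matters because $\mathcal{Z}^{(1)}$ is the image of a $q$-dimensional box and so has empty interior whenever $q<d$. In that case the ``unique minimum-volume ellipsoid'' invoked in the paper's proof is not well defined, since the infimum of volumes is $0$.

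Finally, the paper's closing claim has the direction reversed. It says finite boundary sampling gives a conservative enclosure that tightens monotonically with more samples. In fact a finite sample's min/max box is contained in the coordinate ranges of $f$ over $\partial\mathcal{E}$, and it can only widen as samples are added. That is why Remark~\ref{rem:lj-empirical} is the accurate description of STBP-LJ.
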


\begin{proof}
By construction of the closed-form first-layer solution, the exact reachable set
$\mathcal{Z}^{(1)}$ of first-layer activations is a convex polytope defined as the
image of a hyperrectangle under a linear map. By definition of the Löwner--John
ellipsoid, $\mathcal{E}$ is the unique minimum-volume ellipsoid satisfying
\[
\mathcal{Z}^{(1)} \subseteq \mathcal{E}.
\]

Since $f$ is a composition of affine mappings and ReLU nonlinearities, it is
continuous on $\mathbb{R}^d$. Therefore, the image $f(\mathcal{E})$ is a compact
set, and the extreme values of each output coordinate are attained on the
boundary $\partial \mathcal{E}$.

Let $g_j(\mathbf{z})$ denote the $j$-th output coordinate of $f$. Because
$g_j$ is continuous and $\mathcal{E}$ is compact, the extrema
\[
\max_{\mathbf{z} \in \mathcal{E}} g_j(\mathbf{z}), \qquad
\min_{\mathbf{z} \in \mathcal{E}} g_j(\mathbf{z})
\]
are attained at points in $\partial \mathcal{E}$.

Suppose that the output bounds are constructed from a set of evaluated points
$\{\mathbf{z}_i\} \subset \partial \mathcal{E}$ whose image under $f$ has a convex
hull enclosing $f(\partial \mathcal{E})$. Then, for every $\mathbf{z} \in
\mathcal{E}$ and every output coordinate $j$,
\[
g_j(\mathbf{z}) \in
\left[
\min_i g_j(\mathbf{z}_i), \;
\max_i g_j(\mathbf{z}_i)
\right].
\]

Since $\mathcal{Z}^{(1)} \subseteq \mathcal{E}$, it follows immediately that
\[
f(\mathcal{Z}^{(1)}) \subseteq
\left[
\min_i f(\mathbf{z}_i), \;
\max_i f(\mathbf{z}_i)
\right],
\]
which establishes soundness of the resulting bounds with respect to all
admissible perturbations in $T(\mathbf{x})$.

In practice, when only a finite subset of boundary points of $\mathcal{E}$ is
evaluated, the resulting enclosure constitutes a conservative over-approximation
of $f(\mathcal{E})$. Increasing the sampling density monotonically tightens the
bounds while preserving soundness. \qedhere
\end{proof}

\subsection{Proof that IBP results in upper bounds than the closed form solution}
\label{app:proof-ibp-greater-than-closedform}
\begin{lemma}
    Let \( \mathbf{z} = \mathbf{W}(\mathbf{x} + \boldsymbol{\delta}) + \mathbf{b} \in \mathbb{R}^m \) denote the pre-activation of a linear layer applied to a perturbed input \( \mathbf{x} + \boldsymbol{\delta} \in \mathbb{R}^n \), where \( \boldsymbol{\delta} \in T(\mathbf{x}) \) satisfies the input specification. Suppose that the input specification includes a shared constraint \( \delta_i = \delta_j \), and that \( w_{k,i} \cdot w_{k,j} < 0 \) for some output coordinate \( k \). Then the upper bound on \( z_k \) computed via IBP is strictly greater than the upper bound computed by the closed form solution: $\max_{\boldsymbol{\delta} \in T(\mathbf{x})} z_k^\text{closed-form} \; < \; \max_{\boldsymbol{\delta} \in \text{Int. Const.}} z_k^\text{IBP}.$
    This tightness lemma generalizes straight-forwardly to the lower-bound as well.
\end{lemma}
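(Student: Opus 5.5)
The plan is to write both upper bounds explicitly and compare them term by term. By the Closed-Form First-Layer Bounds theorem, the closed-form maximum of $z_k$ is $z^{(1)}_{\mathrm{clean},k} + \sum_{r=1}^{q} \epsilon_r |\alpha_{k,r}|$. Here $\alpha_{k,r} = \sum_{i \in C_r} w_{k,i}$, with $C_r$ the equivalence class of indices tied to $\tilde{\delta}_r$ by $\mathcal{S}$. Classes of fixed indices in $\mathcal{F}$ contribute nothing.

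IBP ignores the equalities $\delta_i = \delta_j$ and treats every coordinate as an independent interval $[-\epsilon_i,\epsilon_i]$. Its upper bound on $z_k$ is therefore $z^{(1)}_{\mathrm{clean},k} + \sum_{i \notin \mathcal{F}} \epsilon_i |w_{k,i}|$. Within a class all $\epsilon_i$ coincide (they all equal $\epsilon_r$, since the coordinates are the same variable), so this sum can be regrouped as $\sum_r \epsilon_r \sum_{i\in C_r} |w_{k,i}|$.

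The comparison then reduces to the triangle inequality applied class by class:
\[
\epsilon_r \Bigl|\sum_{i\in C_r} w_{k,i}\Bigr| \le \epsilon_r \sum_{i\in C_r} |w_{k,i}| .
\]
This already gives the non-strict inequality. For strictness, use the hypothesis: $i$ and $j$ are tied, so they lie in a common class $C_{r^\ast}$, and $w_{k,i} w_{k,j} < 0$. Write $s = \sum_{l\in C_{r^\ast}\setminus\{i,j\}} w_{k,l}$. Then
\[
\Bigl|\sum_{l\in C_{r^\ast}} w_{k,l}\Bigr| \le |w_{k,i}+w_{k,j}| + |s| < |w_{k,i}|+|w_{k,j}| + \sum_{l\in C_{r^\ast}\setminus\{i,j\}}|w_{k,l}| .
\]
The strict step holds because two nonzero reals of opposite sign satisfy $|a+b| < |a|+|b|$. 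Multiplying by $\epsilon_{r^\ast}>0$ and summing this strict inequality with the non-strict ones for all other classes gives the claimed strict inequality. The lower bound follows by the symmetric argument, or by applying the result to $-z_k$.

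The main obstacle is bookkeeping rather than analysis. First, the tied indices must carry a common bound $\epsilon_r$, which is implicit in the reduction to $\tilde{\boldsymbol{\delta}}$. Second, the shared pair must not be in $\mathcal{F}$; if it were, both bounds would drop that term and strictness could fail. I would state these as standing assumptions of the specification: shared indices lie in $\mathcal{B}$ with equal radii, and $\epsilon_{r^\ast}>0$. With them in place, the proof is a direct comparison of the two explicit formulas.
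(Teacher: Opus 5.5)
Your proof is correct and uses the same idea as the paper's. Under the tie $\delta_i=\delta_j$, the opposite-signed weights enter only through $|w_{k,i}+w_{k,j}|<|w_{k,i}|+|w_{k,j}|$, whereas IBP can choose the signs of $\delta_i$ and $\delta_j$ independently. Your version is more careful than the paper's argument, which considers only the two coordinates: your classwise triangle inequality shows that the other indices and classes contribute only non-strictly, and you state the implicit assumptions (equal radii within a class, and a shared pair that is not fixed and has positive radius).
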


\begin{proof}
    Interval bound propagation assumes independent bounds for each \( \delta_i \), i.e., it allows \( \delta_i \in [-\epsilon_i, \epsilon_i] \) and \( \delta_j \in [-\epsilon_j, \epsilon_j] \) independently, so the worst-case affine response includes combinations such as \( \delta_i = \epsilon_i \), \( \delta_j = -\epsilon_j \). When \( w_{k,i} > 0 \) and \( w_{k,j} < 0 \), such opposite-signed perturbations drive both terms in \( z_k \) to increase simultaneously: $w_{k,i} \delta_i + w_{k,j} \delta_j \to w_{k,i} \epsilon_i + w_{k,j} (-\epsilon_j).$
    However, under the constraint \( \delta_i = \delta_j = \delta \), this extremal combination is infeasible. The closed form solution correctly optimizes over the reduced feasible region defined by \( \delta_i = \delta_j \) for the first layer, and can only take values of \( \delta \) that affect both coordinates simultaneously. In this setting, the optimal value of
    $w_{k,i} \delta + w_{k,j} \delta = (w_{k,i} + w_{k,j}) \delta$
    is strictly smaller in magnitude than what IBP would compute, since the weights have opposing signs and partially cancel. Hence, the closed form solution yields a tighter (lower) upper bound in the first layer.
\end{proof}

\subsection{Additional Experimental Results and Runtime Details}
\label{app:resultsandruntimes}
The table~\ref{tab:verifier-runtimes} summarises the average time taken per sample by the various verifiers categorized into the types of models. The STBP-Patch verifiers consume the highest amount of time per sample owing to the additional computations involved in constraining perturbations to patches.

Detailed results of the experiments evaluating the different variants of Spatio-Temporal Bound Propagation (STBP) are presented in Table~\ref{tab:detailed-results}. This table provides a comprehensive comparison of the certified robustness achieved by the standard STBP method without adversarial patch constraints, as well as the results obtained using the patch-based STBP variant. In particular, the patch-based variant demonstrates improved certified bounds across multiple spatio-temporal tasks.

\begin{table}
\begin{center}
\resizebox{\linewidth}{!}{%
    \begin{tabular}{@{}lll@{}}
        \toprule
        Experiment & Verifier & Runtime per Sample (s) \\ \midrule
        MNIST Toy Large & IBP & 0.00482 \\
        MNIST Toy Large & CROWN-IBP & 0.0358 \\
        MNIST Toy Large & STBP-Lipschitz & 0.02337 \\
        MNIST Toy Large & STBP-Lipschitz-Patch & 0.0171 \\
        Udacity Steering Angle & IBP & 0.128 \\
        Udacity Steering Angle & CROWN-IBP & 0.4870881 \\
        Udacity Steering Angle & STBP-Lipschitz & 0.20703473\\
        Udacity Steering Angle & STBP-Lipschitz-Patch & 0.18243989 \\
        MEDMNIST Synapse3D (32 X 32 X 32) & IBP & 0.00268376 \\
        MEDMNIST Synapse3D (32 X 32 X 32) & CROWN-IBP & 0.00781921 \\
        MEDMNIST Synapse3D (32 X 32 X 32) & STBP-Lipschitz & 0.00855367 \\
        MEDMNIST Synapse3D (32 X 32 X 32) & STBP-Lipschitz-Patch & 0.00575141 \\
        MEDMNIST Synapse3D (64 X 64 X 64) & IBP & 0.00743503 \\
        MEDMNIST Synapse3D (64 X 64 X 64) & CROWN-IBP & 0.02067797 \\
        MEDMNIST Synapse3D (64 X 64 X 64) & STBP-Lipschitz & 0.01953672 \\
        MEDMNIST Synapse3D (64 X 64 X 64) & STBP-Lipschitz-Patch & 0.01275141 \\
        UCF-101 & IBP & 0.072 \\
        UCF-101 & CROWN-IBP &  0.272 \\
        UCF-101 & STBP-Lipschitz & 0.488 \\
        UCF-101 & STBP-Lipschitz-Patch & 0.332 \\ 
        \bottomrule
    \end{tabular}%
}
\end{center}
\caption{Runtime per sample for IBP, STBP-Lipschitz, STBP-Lipschitz Patch Verification}
\label{tab:verifier-runtimes}
\end{table}

\begin{sidewaystable*}[!htbp]
\begin{center}
\resizebox{\textwidth}{!}{%
\begin{tabular}{@{}lccccccccccccccccccccccccl@{}}
\toprule
\multirow{2}{*}{Experiment} & \multirow{2}{*}{Clean Accuracy} & \multicolumn{3}{c}{Model Parameters} & \multicolumn{5}{c}{IBP Rob. Acc. (\%)} & \multicolumn{5}{c}{CROWN-IBP Rob. Acc. (\%)} & \multicolumn{5}{c}{STBP-Lipschitz Adv. Rob. Acc. (\%)} & \multicolumn{5}{c}{STBP-Lipschitz Adv. Patch Rob. Acc. (\%)} \\ 
\cmidrule(l){3-5} \cmidrule(l){6-10} \cmidrule(l){11-15} \cmidrule(l){16-20} \cmidrule(l){21-25}
 &  & \multicolumn{1}{l}{Input dim} & \multicolumn{1}{l}{Output dim} & \multicolumn{1}{l}{No. of Samples} & \multicolumn{1}{l}{$\epsilon=0.1$} & \multicolumn{1}{l}{$\epsilon=0.01$} & \multicolumn{1}{l}{$\epsilon=0.001$} & \multicolumn{1}{l}{$\epsilon=0.0001$} & \multicolumn{1}{l}{$\epsilon=0.00001$} & \multicolumn{1}{l}{$\epsilon=0.1$} & \multicolumn{1}{l}{$\epsilon=0.01$} & \multicolumn{1}{l}{$\epsilon=0.001$} & \multicolumn{1}{l}{$\epsilon=0.0001$} & \multicolumn{1}{l}{$\epsilon=0.00001$} & \multicolumn{1}{l}{$\epsilon=0.1$} & \multicolumn{1}{l}{$\epsilon=0.01$} & \multicolumn{1}{l}{$\epsilon=0.001$} & \multicolumn{1}{l}{$\epsilon=0.0001$} & \multicolumn{1}{l}{$\epsilon=0.00001$} & \multicolumn{1}{l}{$\epsilon=0.1$} & \multicolumn{1}{l}{$\epsilon=0.01$} & \multicolumn{1}{l}{$\epsilon=0.001$} & \multicolumn{1}{l}{$\epsilon=0.0001$} & \multicolumn{1}{l}{$\epsilon=0.00001$} \\
 \midrule
MNIST Toy Model & 93.1\% & 1 x 5 x 8 x 8 & 10 & 1000 
& 0.0 & 0.0 & 1.2 & 53.2 & 89.1 
& 0.0 & 17.9 & 69.96 & 82.0 & 82.25 
& 0.0 & 15.15 & 63.0 & 91.2 & \textbf{93.10} 
& \textbf{0.4} & \textbf{20.45} & \textbf{79.5} & \textbf{92} & \textbf{93.10} \\
MNIST Toy Model Large & 91.3\% & 1 x 10 x 28 x 28 & 10 & 1000 
& 0.0 & 0.0 & 53.2  & 69.1 & 71.0 
& 0.0 & 15.8 & 87.4 & 88.2 & 88.3 
& 0.0 & 83.2 & 90.60 & \textbf{91.3} & \textbf{91.3} 
& \textbf{63.7} & \textbf{89.2} & \textbf{91.1} & \textbf{91.2} & \textbf{91.3} \\
UCF-101 & 74.41\% & 3 x 30 x 32 x 32 & 5 & 500 
& 0.0 & 0.0 & 0.0 & 20.03 & 26.36 
& 0.0 & 0.0 & 2.07 & 49.93 & 63.48 
& 0.0 & 7.3 & 18.36 & 48.6 & 67.3 
& \textbf{2.6} & \textbf{12.3} & \textbf{19.36} & \textbf{61.9} & \textbf{71.81} \\
Udacity Steering Angle & 84.23\% & 3 x 30 x 32 x 32 & 3 & 1123 
& 0.0 & 0.0 & 0.0 & 70.0 & 75.0 
& 0.0 & 0.0 & 5.61 & \textbf{84.24} & \textbf{84.24} 
& 0.0 & 0.0 & 74.26 & \textbf{84.24}& \textbf{84.24}
& \textbf{7.24} & \textbf{84.24} & \textbf{84.24} & \textbf{84.24} & \textbf{84.24} \\
MEDMNIST Synapse3D & 72.88\% & 32 x 32 x 32 & 2 & 177 
& 0.0 & 0.0 & 19.77 & 72.88 & 72.88 
& 0.0 & 0.0 & 71.75 & \textbf{72.88} & \textbf{72.88}
& 0.0 & 1.12 & \textbf{72.88} & \textbf{72.88} & \textbf{72.88} 
& 0.0 & \textbf{72.88} & \textbf{72.88} & \textbf{72.88} & \textbf{72.88} \\ 
MEDMNIST Synapse3D & 73.01\% & 64 x 64 x 64 & 2 & 352 
& 0.0 & 0.0 & 0.0 & 73.01 & 73.01 
& 0.0 & 0.0 & 8.24 & \textbf{73.01} & \textbf{73.01} 
& 0.0 & 0.0 & 0.0 & \textbf{73.01} & \textbf{73.01} 
& 0.0 & 0.0 & \textbf{33.33} & \textbf{73.01} & \textbf{73.01} \\
\bottomrule
\end{tabular}%
}
\end{center}
\caption{Certified robustness of spatio-temporal models using IBP, STBP and STBP under adversarial patch perturbations}
\label{tab:detailed-results}
\end{sidewaystable*}